\documentclass[letterpaper]{article} 
\usepackage{aaai2027}  
\usepackage[hyphens]{url}  
\usepackage{graphicx} 
\usepackage{natbib}  
\usepackage{caption} 
\usepackage{algorithm}
\usepackage{algorithmic}

\usepackage{newfloat}
\usepackage{listings}
\DeclareCaptionStyle{ruled}{labelfont=normalfont,labelsep=colon,strut=off} 
\floatstyle{ruled}
\newfloat{listing}{tb}{lst}{}
\floatname{listing}{Listing}

\usepackage{booktabs}
\usepackage[table]{xcolor}

\usepackage{multirow} 
\usepackage{booktabs} 
\usepackage{multirow}
\usepackage{amsmath}
\usepackage{graphicx}   
\usepackage{multirow}   
\usepackage{array}      
\usepackage{booktabs}
\usepackage{multirow}
\usepackage{makecell}
\usepackage{graphicx}
\usepackage{tabularx}
\usepackage{multirow}    
\usepackage{booktabs}    
\usepackage{graphicx} 
\usepackage{booktabs}    
\usepackage{multirow}    
\usepackage{graphicx}    
\usepackage{amsmath}
\usepackage{algorithm}
\usepackage{algorithmic}
\usepackage{newfloat}
\usepackage{listings}
\usepackage{amsmath} 
\usepackage{amssymb}
\usepackage{multirow}
\usepackage[hyphens]{url}  
\usepackage{graphicx} 
\usepackage{natbib}  
\usepackage{caption} 
\usepackage{booktabs}
\usepackage{multirow} 
\usepackage{booktabs} 
\usepackage{multirow}
\usepackage{amsmath}
\usepackage{graphicx}   
\usepackage{multirow}   
\usepackage{array}      
\usepackage{booktabs}
\usepackage{multirow}
\usepackage{makecell}
\usepackage[table]{xcolor}
\usepackage{graphicx}
\usepackage{tabularx}
\usepackage{multirow}    
\usepackage{booktabs}    
\usepackage{graphicx} 
\usepackage{booktabs}    
\usepackage{multirow}    
\usepackage{graphicx}    
\usepackage{amsmath}
\usepackage{amsthm}
\usepackage{algorithm}
\usepackage{algorithmic}
\usepackage{amssymb}
\usepackage{multirow}
\definecolor{Gray}{gray}{0.94}
\definecolor{blue1}{HTML}{508AB2}
\definecolor{green2}{HTML}{BFF6BA}
\usepackage[most]{tcolorbox}
\tcbset{
  myexample/.style={
    colback=green2!5,
    colframe=blue1,
    fonttitle=\bfseries,
    left=.02in,
    right=.02in,
    bottom=.02in,
    top=.02in
  }
}

\newtcbtheorem[number within=section]{exmp}{Example}{myexample}{exmp}
\newtcbtheorem[number within=section]{reward}{Reward}{myexample}{exmp}
\newtcbtheorem[number within=section]{prompt}{Prompt}{myexample}{prompt}
\newtcbtheorem[number within=section]{prmprompt}{PRM Step Labeling Prompt}{myexample}{prmprompt}
\theoremstyle{plain}
\newtheorem{theorem}{Theorem}[section]

\newtheorem{proposition}[theorem]{Proposition}

\theoremstyle{definition}
\newtheorem{definition}[theorem]{Definition}

\theoremstyle{remark}

\theoremstyle{definition}
\newtheorem{assumption}[theorem]{Assumption}   

\title{ActiveMem: Dynamic Latent Memory Trees for Long-Horizon Agents}
\author{
    Song-Li Wu\textsuperscript{\rm 1}\equalcontrib,
    Jingyi Wang\textsuperscript{\rm 1}\equalcontrib,
    Zhaocheng Du\corresponding\textsuperscript{\rm 2}, 
    Weinan Gan\textsuperscript{\rm 2}
}
\affiliations{
    \textsuperscript{\rm 1}Tsinghua University,
    \textsuperscript{\rm 2}Huawei Noah’s Ark Lab
}

\begin{document}

\maketitle

\begin{abstract}
Large Language Model (LLM) agents increasingly rely on external memory to support long-horizon reasoning and decision making. Existing memory systems typically retrieve historical trajectories or summaries as independent context fragments, overlooking the procedural dependencies underlying multi-step execution. As memory scales, such flat retrieval introduces context fragmentation and cross-task interference, leading to structurally inconsistent reasoning trajectories.
We propose ActiveMem, a hierarchical memory framework that recursively organizes agent experiences into dependency-aware latent execution trees. ActiveMem abstracts trajectories into reusable subtask nodes while explicitly preserving execution transitions, enabling coherent reasoning-path retrieval conditioned on the current execution state. To support continual adaptation, ActiveMem further learns dynamic memory expansion, retrieval, and pruning policies through reinforcement learning.
Experiments across various agent benchmarks demonstrate that ActiveMem consistently improves task completion, reasoning stability, and memory efficiency over existing memory-based agents. Moreover, ActiveMem enables compact open-weight models to achieve competitive performance with substantially larger proprietary systems.
\end{abstract}

\begin{figure*}[t]
  \centering
  \includegraphics[width=1\linewidth]{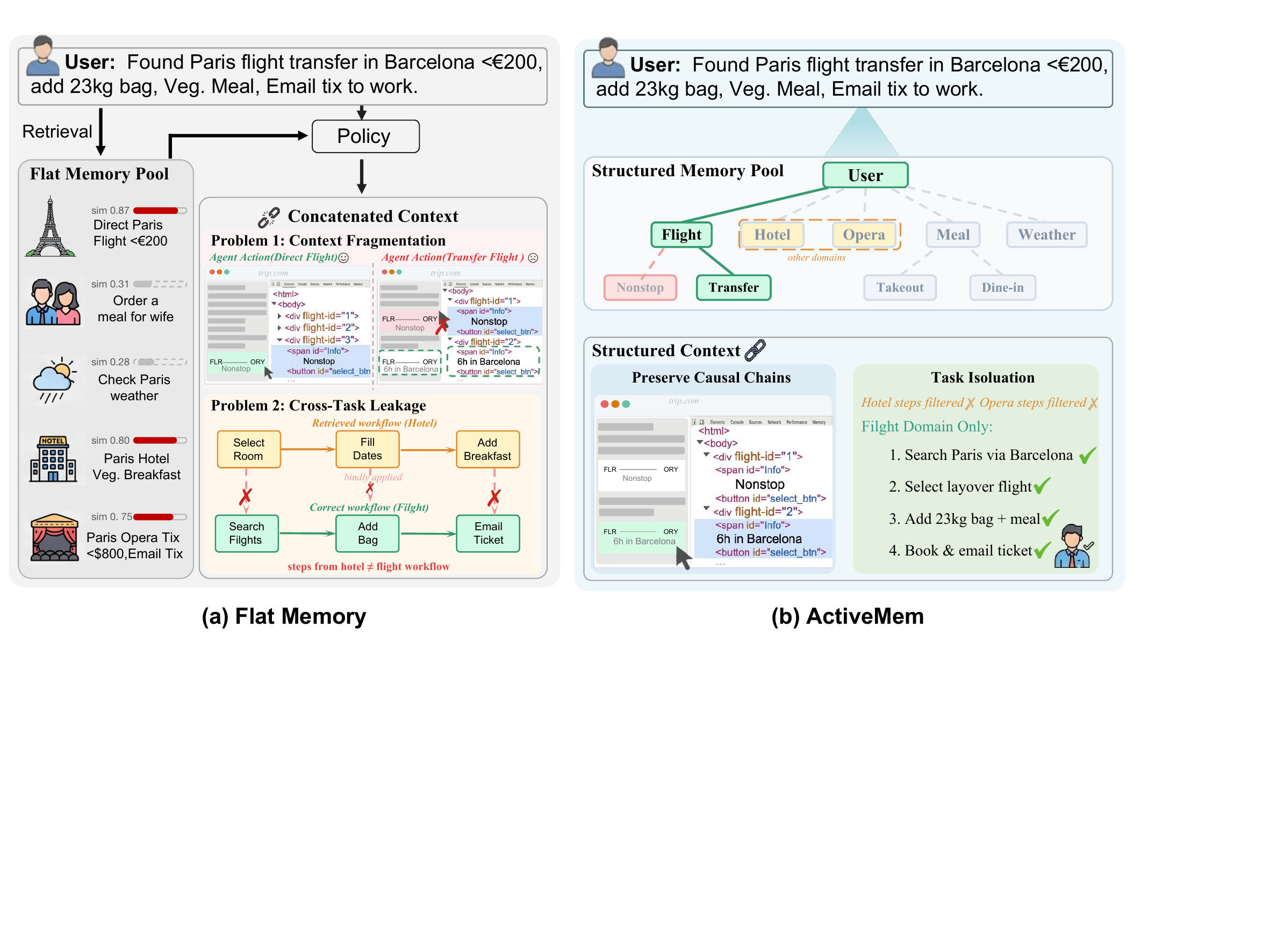}
\caption{Comparison between Flat Memory and ActiveMem. (a) Flat memory methods retrieve disparate context fragments and mixed workflows from a flat pool based on simple similarity, leading to context fragmentation and cross-task leakage (e.g., blindly applying hotel booking steps to a flight workflow). (b) ActiveMem organizes context using a structured memory pool with a tree hierarchy. It enforces task isolation to filter out irrelevant domains and preserves causal chains mapping to the HTML/UI, enabling the agent to accurately plan domain-specific steps and execute the correct transfer flight action.}
  \label{teaset}
\end{figure*}

\section{Introduction}

Large Language Model (LLM) agents are increasingly expected to solve long-horizon tasks that require persistent reasoning, adaptive planning, and multi-step interaction with dynamic environments~\citep{xu2026agent,hu2025hiagent}. Unlike static language generation, successful agent execution depends on maintaining coherent procedural states across extended decision trajectories, where early actions directly influence subsequent reasoning and execution outcomes~\citep{chu2026redsearcher,erdoganplan}. To support such long-term decision making, recent agent frameworks increasingly incorporate external memory mechanisms that accumulate and retrieve historical experiences~\citep{zhou2025mem1,zhang2025memgen}.

Despite recent progress, existing memory systems primarily retrieve past experiences as trajectory fragments, summarized histories, or semantically similar contexts~\citep{yao2022react,wu2026webdancer}, often overlooking the procedural dependencies underlying long-horizon execution. However, agent behaviors are inherently structured, where interdependent subtasks follow causally constrained execution transitions rather than isolated action sequences. As illustrated in Fig.~\ref{teaset}, retrieving memories from such flat memory structures can introduce context fragmentation and cross-task interference, leading to workflow mismatch and inconsistent reasoning trajectories. For example, the agent may retrieve functionally relevant yet sequentially incompatible actions from unrelated tasks (e.g., hotel-booking procedures during flight booking). As memory scales over extended interaction histories, these inconsistencies increasingly destabilize reasoning and degrade execution coherence.

A key challenge therefore lies not only in storing historical experiences, but in organizing and retrieving them in a manner consistent with procedural execution structure. Long-horizon tasks naturally exhibit hierarchical decomposition: high-level goals are progressively resolved into reusable subtasks, while successful execution depends on preserving dependency relationships between intermediate reasoning states. This suggests that effective agent memory should support both structured abstraction and dependency-aware retrieval, rather than treating historical trajectories as independent reusable contexts.

Motivated by this observation, we propose \textbf{ActiveMem}, a hierarchical memory framework for long-horizon LLM agents. ActiveMem organizes agent experiences into a \emph{Latent Memory Tree}, where nodes encode reusable subtask-level reasoning abstractions and edges capture their execution dependencies. This structure enables the agent to retrieve coherent reasoning paths aligned with the current execution stage while suppressing interference from irrelevant procedural branches. Rather than directly retrieving isolated historical trajectories, ActiveMem retrieves dependency-consistent execution paths that preserve high-level procedural structure during downstream reasoning. As shown in Fig.~\ref{teaset}(b), our ActiveMem uses structured memory to preserve causal chains and enforce task isolation, retrieving only domain-relevant steps and enabling correct, end-to-end execution.

Beyond structured memory representation, ActiveMem further models memory evolution as an adaptive decision-making process. Specifically, the framework dynamically expands, refines, compresses, and prunes the hierarchical memory topology according to the agent’s ongoing reasoning state and long-horizon task feedback. During interaction, ActiveMem selectively activates dependency-consistent memory paths to support current reasoning, while reinforcement learning continuously optimizes memory organization policies through trajectory-level rewards. This design enables the agent to progressively form compact and reusable long-term memory abstractions while keeping the backbone LLM frozen.

We evaluate ActiveMem across various long-horizon reasoning and agent benchmarks spanning interactive planning, web navigation, and multi-step decision making. Experimental results show that ActiveMem consistently improves reasoning stability and task completion over existing memory-based agents while substantially reducing active context usage. Further analysis demonstrates that explicitly modeling procedural dependencies enables more robust memory retrieval and stronger generalization across heterogeneous task environments.

\section{Related Works}

\subsection{Large Language Model Agents}

Recent advancements have elevated Large Language Models (LLMs) to autonomous foundation agents capable of continuous perception, reasoning, and tool manipulation~\citep{xi2025rise}. Despite their rapid deployment across complex domains~\citep{luo2025large,liu2025advances}, achieving robust long-horizon reliability remains a primary bottleneck due to compounding errors and finite context windows~\citep{zhang2025web,wu2025agentic}. Consequently, explicit memory management has emerged as an indispensable infrastructure~\citep{lu2025scaling}. While current frameworks employ summarization, reflection, or consolidation to extend context capacity~\citep{zheng2025newtonbench,xi2025agentgym}, they predominantly flatten multi-step interactions into unstructured formats. This structural oversimplification completely fails to preserve causal dependencies, underscoring the critical need to overcome such representational bottlenecks for sustained, complex problem-solving.

\subsection{Memory Agent}

External memory is essential for long-horizon LLM agents operating beyond limited context windows. Existing approaches typically either accumulate raw interaction trajectories~\citep{yao2022react,wu2026webdancer}, which preserve execution details but suffer from context saturation and retrieval noise, or compress experiences through summarization~\citep{zhou2025mem1}, improving efficiency at the cost of procedural information loss. More broadly, episodic memory systems~\citep{wang2023voyager,packer2023memgpt} generally organize experiences as flat sequences or isolated memory units indexed by semantic similarity.
Flat memory paradigms largely overlook the structural dependencies underlying long-horizon execution. As complex agent behaviors consist of interdependent subtasks with causally constrained transitions, retrieving isolated experiences often entangles high-level planning with incompatible execution details, leading to cross-task interference and inconsistent reasoning. 

\section{Method}

\subsection{Overview and Problem Formulation}

We formulate agent interaction as a sequential decision-making process. Given a task instruction $x$, a frozen LLM $\pi_\theta$ interacts with an environment $\mathcal{E}$ over $T$ steps, producing a trajectory $\tau = (s_0, a_0, \dots, s_T)$. At any step $t$, the historical trajectory prefix $\tau_{<t}$ is condensed into a dynamic Hierarchical Latent Memory Tree (HLMT), denoted as $\mathcal{T}_t$, to prevent context overflow.
The memory state evolves dynamically through a discrete update function composed of immediate topological actions and periodic reward-based maintenance:

\begin{equation}
\mathcal{T}_t = \mathrm{TreeUpdate}(\mathcal{T}_{t-1}, a_{t-1}^{\mathrm{tree}}).
\end{equation}

Given the current state $s_t$, the framework retrieves a relevant memory path $\mathcal{P}_t = f_{\mathrm{retrieve}}(s_t, \mathcal{T}_t)$. A standalone lightweight controller $\mathcal{W}_\phi$ processes $\mathcal{P}_t$ alongside the current observation context $\mathbf{h}_t = f_{\mathrm{enc}}(s_t; \pi_\theta)$, where $f_{\mathrm{enc}}$ extracts the last-token hidden state from the frozen LLM, to produce a shared representation $\mathbf{u}_t$. This representation routes into two heads: generating a latent memory prefix $m_t$ to condition the frozen LLM, and predicting a tree action $a_t^{\mathrm{tree}}$ to manage memory topology.

We optimize the shared parameters $\phi$ via GRPO \citep{guo2025deepseek}. Although the latent head does not output discrete actions, the injected prefix $m_t(\phi)$ alters the hidden activations of the frozen LLM, thereby implicitly defining a new parameterized policy over action tokens. Thus, the latent head is updated by optimizing this memory-conditioned policy:

\begin{equation}
\pi_{\theta,\phi}(a_t \mid s_t) = \pi_\theta(a_t \mid s_t, m_t(\phi)).
\end{equation}

Our goal is to find parameters $\phi$ that maximize the expected trajectory return:

\begin{equation}
\max_{\phi} \mathbb{E}_{x \sim \mathcal{D}, \tau \sim \pi_{\theta, \phi}} \big[ R(\tau) \big].
\end{equation}

To implement this formulation, we propose ActiveMem. Our design explicitly departs from prior memory-augmented agents in three fundamental aspects: (1) utilizing continuous latent memory instead of raw textual chunks; (2) actively optimizing memory topology rather than passively appending history; and (3) jointly optimizing reasoning adaptation and memory structures via reinforcement learning.

\begin{algorithm}[tb]
\caption{ActiveMem Forward Pipeline}
\label{alg:activemem_pipeline}
\begin{algorithmic}[1]
\REQUIRE Initial state $s_0$, frozen LLM $\pi_\theta$, learnable controller $\mathcal{W}_\phi$.
\ENSURE Updated memory tree $\mathcal{T}_T$ and optimized parameters $\phi$.
\STATE \textbf{Initialize:} $\mathcal{T}_0$ with a virtual root node $n_r$.
\FOR{$t = 1, \dots, T$}
    \STATE $\mathbf{h}_t \leftarrow f_{\mathrm{enc}}(s_t; \pi_\theta)$
    \STATE $\mathcal{P}_t \leftarrow f_{\mathrm{retrieve}}(s_t, \mathcal{T}_{t-1})$
    \STATE $\bar{\mathbf{p}}_t \leftarrow \frac{1}{|\mathcal{P}_t|} \sum_{n_i \in \mathcal{P}_t} \mathbf{e}_i$
    \STATE $\mathbf{u}_t \leftarrow \mathrm{MLP}_{\mathrm{shared}}([\mathbf{h}_t; \bar{\mathbf{p}}_t])$
    \STATE $m_t \leftarrow W_{\mathrm{latent}}\mathbf{u}_t$ 
    \STATE $a_t^{\mathrm{tree}} \sim \mathrm{Softmax}(W_{\mathrm{tree}}\mathbf{u}_t)$ 
    \STATE $a_t \sim \pi_{\theta, \phi}(\cdot \mid s_t, m_t)$
    \STATE $s_{t+1}, r_t \leftarrow \mathcal{E}(s_t, a_t)$
    \STATE $\mathcal{T}_t \leftarrow \mathrm{TreeUpdateImmediate}(\mathcal{T}_{t-1}, a_t^{\mathrm{tree}}, \mathbf{u}_t, \mathcal{P}_t)$ 
\ENDFOR
\STATE $\mathcal{T}_T \leftarrow \mathrm{TreeMaintenance}(\mathcal{T}_T)$ 
\STATE $\phi \leftarrow \mathrm{OptimizeParameters}(\{s_t, a_t, r_t\}_{t=1}^T, \phi)$ 
\end{algorithmic}
\end{algorithm}

\subsection{Hierarchical Latent Memory Tree (HLMT)}

HLMT organizes history into a directed tree topology $\mathcal{T}_t$. Each node is defined as a tuple $n_i = (\mathbf{e}_i, pa(i), ch(i))$. Here, $\mathbf{e}_i \in \mathbb{R}^d$ is a node embedding representing a compressed latent summary of historical states produced by the controller, while $pa(i)$ and $ch(i)$ denote the parent node pointer and the set of child nodes, respectively.
While graph structures support broad associative search, a tree framework explicitly models semantic hierarchy. Upper nodes capture coarse task semantics, whereas lower nodes preserve fine-grained execution details. Consequently, the retrieval operation progressively narrows the search space, enabling efficient coarse-to-fine reasoning.
At $t=0$, the tree is initialized with a virtual root node $n_r$, serving solely to initialize traversal. It does not encode task-specific semantics and resets at the episode boundary.
Using $\mathbf{h}_t$ as the query, we perform a recursive greedy traversal. At depth $k$, we select the child maximizing cosine similarity:

\begin{equation}
n_{k+1} = \arg\max_{n \in ch(k)} \frac{\mathbf{h}_t \cdot \mathbf{e}_n}{\|\mathbf{h}_t\| \|\mathbf{e}_n\|}.
\end{equation}

The traversal terminates at step $d$ when the maximum similarity falls below a threshold $\tau_{\mathrm{stop}}$. If no child satisfies the criterion, the path consists only of the root. The resulting path $\mathcal{P}_t = (n_1, \dots, n_d)$ represents the activated historical branch.

\subsection{Dual-Head Memory Controller}

The independent controller $\mathcal{W}_\phi$ processes the structured history. First, the retrieved path $\mathcal{P}_t$ is aggregated via mean pooling: $\bar{\mathbf{p}}_t = \frac{1}{|\mathcal{P}_t|} \sum_{n_i \in \mathcal{P}_t} \mathbf{e}_i$. Because the tree traversal inherently preserves explicit semantic hierarchy, the path order is implicitly encoded by the traversal depth, making mean pooling highly effective. The shared representation is then computed via a 2-layer MLP: $\mathbf{u}_t = \mathrm{MLP}_{\mathrm{shared}}([\mathbf{h}_t; \bar{\mathbf{p}}_t])$.

\noindent\textbf{Latent Injection Head.} To condition the generation process without the heavy overhead of layer-wise modifications, the latent head generates a dynamic continuous prefix $m_t = W_{\mathrm{latent}}\mathbf{u}_t \in \mathbb{R}^{l_{p} \times d_{\mathrm{model}}}$. Instead of intervening at every transformer layer, this latent prefix is directly prepended to the input embeddings of the current observation. By acting as a dynamic soft prompt, $m_t$ effectively guides the frozen LLM's reasoning. During RL optimization, although the LLM parameters $\theta$ are strictly frozen, gradients from the RL objective are backpropagated through the LLM's computational graph down to the input layer, allowing the controller parameters $\phi$ to be updated end-to-end.

\noindent\textbf{Tree Action Head.} Concurrently, the tree head predicts a probability distribution over discrete topological operations: $\pi_{\mathrm{tree}}(\cdot \mid \mathbf{u}_t) = \mathrm{Softmax}(W_{\mathrm{tree}}\mathbf{u}_t)$. At each step, a discrete action $a_t^{\mathrm{tree}} \sim \pi_{\mathrm{tree}}$ is sampled. To manage the memory structure efficiently without overwhelming the controller, we decouple the topology optimization into high-frequency semantic updates (Immediate Execution) and low-frequency utility-based pruning (Deferred Execution).

\paragraph{Immediate Execution (\texttt{TreeUpdateImmediate}).} 
During step $t$, the memory topology is dynamically updated online based on the sampled tree action $a_t^{\mathrm{tree}}$. Specifically, if the action dictates an insertion ($a_t^{\mathrm{tree}} = \texttt{insert}$), a new node $\mathbf{e}_{\mathrm{new}} = W_{\mathrm{ins}}\mathbf{u}_t$ is attached as a child to the retrieved leaf $n_d$ to explicitly preserve the temporal hierarchy. Conversely, if an update is triggered ($a_t^{\mathrm{tree}} = \texttt{update}$), the existing leaf node is refined via a learned gating mechanism $\alpha = \sigma(W_{\mathrm{gate}}\mathbf{u}_t)$, which updates its representation as $\mathbf{e}_d \leftarrow (1-\alpha)\mathbf{e}_d + \alpha W_{\mathrm{upd}}\mathbf{u}_t$. To prevent catastrophic representational collapse during continuous RL optimization, all newly inserted or updated embeddings are explicitly constrained via $L_2$ normalization.

\paragraph{Deferred Execution (\texttt{TreeMaintenance}).} 
At the episode end $T$, we perform periodic structural maintenance to ensure memory efficiency and reliability based on environmental feedback. We first calculate the episodic returns $S(n_i)$ for all activated nodes, averaging the returns for nodes with multiple activations within the same episode to prevent over-crediting high-frequency loops. These utilities are then smoothed using an exponential moving average: $S_{\mathrm{EMA}}(n_i) \leftarrow (1-\beta)S_{\mathrm{EMA}}(n_i) + \beta S(n_i)$. Based on these smoothed statistics, we prune obsolete nodes whose $S_{\mathrm{EMA}}(n_i)$ falls below a threshold $\tau_{\mathrm{del}}$, reassigning their children $ch(i)$ directly to the parent $pa(i)$ to maintain topological connectivity. Finally, to further reduce computational overhead, any sibling nodes exhibiting high representational similarity ($\mathrm{sim}(\mathbf{e}_i, \mathbf{e}_j) > \tau_{\mathrm{merge}}$) are fused together via average pooling.

\subsection{Training Recipe}

We optimize the shared parameters $\phi$ via Group Relative Policy Optimization (GRPO), eliminating the need for a memory-intensive critic model. For a given reasoning task, we sample a group of trajectories $\mathcal{G}$ from the current policy. To address the sparsity of final rewards in long-horizon interactions, we enhance the GRPO framework with verifiable process supervision. Thus, the episodic utility $S(n_i)$ for a memory node incorporates both fine-grained verifiable step rewards and the final return: $S(n_i)=\sum_{k=t_i}^{T}\gamma^{k-t_i}r_k$.
Instead of relying on a learned baseline, the hierarchical advantage $A_i$ evaluates structural contribution by normalizing returns within the sampled group:
\begin{equation}
A_i=\lambda_g\left(\frac{S(n_i)-\mu_{\mathcal{G}}(S)}{\sigma_{\mathcal{G}}(S)+\epsilon_0}\right)+\lambda_l\left(S(n_i)-S(pa(i))\right)
\end{equation}
where $\mu_{\mathcal{G}}(S)$ and $\sigma_{\mathcal{G}}(S)$ are the mean and standard deviation of utilities across the group $\mathcal{G}$. The global advantage encourages task-level usefulness relative to alternative group rollouts, while the local advantage preserves relative node specialization.

The parameters $\phi$ are updated using a joint GRPO surrogate objective:
\begin{equation}
\mathcal{L}(\phi)=\mathcal{L}_{\mathrm{GRPO}}^{\mathrm{latent}}(\phi)+\lambda\mathcal{L}_{\mathrm{GRPO}}^{\mathrm{tree}}(\phi)
\end{equation}
where the latent objective updates the memory-conditioned policy using the group-normalized environment advantage $A_t^{\mathrm{env}}$ and a Kullback-Leibler (KL) divergence penalty to prevent policy collapse:
\begin{equation}
\begin{aligned}
&\mathcal{L}_{\mathrm{GRPO}}^{\mathrm{latent}}(\phi) = -\mathbb{E}_{\tau\in\mathcal{G},t} [ \min ( \rho_t^{\mathrm{env}}(\phi)A_t^{\mathrm{env}}, \\
&\mathrm{clip}\left(\rho_t^{\mathrm{env}}(\phi),1-\epsilon,1+\epsilon\right)A_t^{\mathrm{env}} ) - \beta\mathbb{D}_{\mathrm{KL}}\left(\pi_{\theta,\phi}\|\pi_{\mathrm{ref}}\right) ]
\end{aligned}
\end{equation}
with the probability ratio $\rho_t^{\mathrm{env}}(\phi)=\frac{\pi_{\theta,\phi}(a_t\mid s_t)}{\pi_{\theta,\phi_{\mathrm{old}}}(a_t\mid s_t)}$. The tree objective $\mathcal{L}_{\mathrm{GRPO}}^{\mathrm{tree}}(\phi)$ follows an identical clipped formulation, replacing $\rho_t^{\mathrm{env}}$ with the tree action ratio and utilizing the hierarchical group advantage $A_i$. Because both heads optimize complementary objectives on a shared representation, gradients are jointly accumulated. We did not observe training instability.

\section{Experiments}
\begin{table*}[t]
\centering
\caption{Results on SmolLM3-3B and Qwen3-8B. All values represent the performance metric for each task (e.g., accuracy \%). We highlight the best and second best results.}
\resizebox{\textwidth}{!}{
\begin{tabular}{ll|ccccccccc}
\hline
Backbone & Method & ALFWorld & TriviaQA & PopQA & KodCode & BigCodeBench & GPQA & GSM8K & MATH \\
\hline
\multirow{15}{*}{SmolLM3-3B}
& Vanilla & 18.96 & 10.47 & 8.23 & 37.05 & 35.96 & 9.35 & 47.63 & 16.22 \\
& CoT & 17.60 & 12.88 & 9.95 & 38.45 & 39.42 & 20.70 & 58.91 & 56.33 \\
\cline{2-10}
& SFT & 32.36 & 55.25 & 37.22 & 59.25 & 40.79 & 19.70 & 63.48 & 45.65 \\
& GRPO & 55.35 & 65.88 & 45.16 & 68.48 & 72.44 & 22.73 & 80.03 & 61.23 \\
& REINFORCE & 53.13 & 63.20 & 46.81 & 65.53 & 67.14 & 23.44 & 82.03 & 58.75 \\
& REINFORCE++ & 53.95 & 63.20 & 44.10 & 65.90 & 68.80 & 22.73 & 81.50 & 59.89 \\
& Agent-FLAN & 34.00 & 56.70 & 39.50 & 56.80 & 37.20 & 17.80 & 59.60 & 36.84 \\
\cline{2-10}
& ExpeL & 36.18 & 46.20 & 28.16 & 51.14 & 40.22 & 15.15 & 56.23 & 38.11 \\
& MemoryBank & 32.80 & 43.30 & 25.81 & 44.50 & 31.80 & 10.20 & 58.30 & 43.53 \\
& AWM & 40.50 & 49.80 & 29.60 & - & - & - & - & - \\
\cline{2-10}
& SoftCoT & 35.03 & 50.38 & 34.90 & 59.20 & 39.10 & 17.22 & 56.34 & 44.62 \\
& Co-processor & 38.36 & 53.28 & 38.96 & 56.25 & 45.40 & 20.10 & 57.60 & 38.81 \\
& \text{MemGen}$_\text{SFT}$ & 50.60 & 68.13 & 42.34 & 62.65 & 42.99 & 26.75 & 70.42 & 57.44  \\
& \text{MemGen}$_\text{GRPO}$ &63.60 &79.30 &58.60 &72.85 & 74.24 & 25.20 &83.47 &63.65 \\
\cline{2-10}
& \textbf{ActiveMem}$_\text{SFT}$ &  64.35 &  80.42 &  61.82 &  74.89 &  78.31 & 31.52&  85.76 &  64.97 \\
& \textbf{ActiveMem}$_\text{GRPO}$ & \textbf{73.26} & \textbf{86.14} & \textbf{71.30} & \textbf{84.02} & \textbf{92.26} & \textbf{36.83} & \textbf{90.36} & \textbf{80.62} \\
\hline
\multirow{15}{*}{Qwen3-8B}
& Vanilla & 58.93 & 52.18 & 34.13 & 49.10 & 33.33 & 38.18 & 89.48 & 79.82 \\
& CoT & 57.10 & 53.80 & 33.20 & 51.25 & 35.59 & 35.15 & 87.67 & 78.24 \\
\cline{2-10}
& SFT & 83.59 & 74.55 & 51.12 & 64.75 & 41.33 & 40.33 & 90.76 & 81.35 \\
& GRPO & 85.60 & 76.15 & 58.90 & 73.35 & 70.24 & 39.54 & 92.30 & 83.54 \\
& REINFORCE & 82.10 & 75.22 & 57.96 & 72.11 & 70.20 & 37.12 & 91.25 & 83.27 \\
& REINFORCE++ & 84.80 & 75.90 & 58.30 & 72.90 & 71.88 & 37.68 & 91.90 & 85.24 \\
& Agent-FLAN & 80.32 & 70.32 & 50.08 & 62.99 & 43.40 & 39.50 & 87.60 & 80.05 \\
\cline{2-10}
& ExpeL & 78.97 & 65.54 & 40.33 & 57.20 & 34.23 & 35.15 & 86.20 & 77.40 \\
& MemoryBank & 70.41 & 60.56 & 41.60 & 56.39 & 40.61 & 35.66 & 90.35 & 80.35 \\
& AWM & 80.33 & 69.30 & 43.69 & - & - & - & - & - \\
\cline{2-10}
& SoftCoT & 75.60 & 59.42 & 39.42 & 63.28 & 38.27 & 39.60 & 86.30 & 76.23 \\
& Co-processor & 73.28 & 61.42 & 45.55 & 64.90 & 42.19 & 39.15 & 76.23 & 79.20 \\
& MemGen$_\text{SFT}$ & 85.82 & 77.22 & 54.65 & 66.15 & 40.35 & 43.23 & 91.25 & 83.30 \\
& MemGen$_\text{GRPO}$ &90.60 & 80.65 & 62.30 & 76.16 & 75.56 & 40.24 & 93.20 & 88.24 \\
\cline{2-10}
& \textbf{ActiveMem}$_\text{SFT}$ &  90.73 &  82.31 &  66.27 &  79.37 &  77.84 & 47.83 &  93.92 &  89.38 \\
& \textbf{ActiveMem}$_\text{GRPO}$ & \textbf{95.57} & \textbf{87.46} & \textbf{73.20} & \textbf{83.42} & \textbf{84.39} & \textbf{54.49} & \textbf{94.81} & \textbf{92.76} \\
\hline
\end{tabular}
}
\label{tab:memgen}
\end{table*}

\begin{table*}[t]
\centering
\caption{Ablation Study on Qwen3-8B. We evaluate the contribution of each major component in ActiveMem. All values denote task performance (\%).}
\resizebox{\textwidth}{!}{
\begin{tabular}{l|cccccccc}
\hline
 Method & ALFWorld & TriviaQA & PopQA & KodCode & BigCodeBench & GPQA & GSM8K & MATH \\
\hline

 \textbf{ActiveMem}$_\text{GRPO}$ & \textbf{95.57} & \textbf{87.46} & \textbf{73.20} & \textbf{83.42} & \textbf{84.39} & \textbf{54.49} & \textbf{94.81} & \textbf{92.76} \\

\hline

w/o Tree Action Head
& 89.88 
& 81.93 
& 67.80 
& 80.95 
& 79.46 
& 48.53 
& 86.41 
& 89.83 \\

w/o Merge/Delete
& 93.11 
& 84.24 
& 71.24 
& 81.54 
& 83.64 
& 52.13 
& 88.02 
& 87.11 \\

w/o Hierarchical Advantage
& 90.74 
& 83.25 
& 69.93 
& 82.46 
& 81.35 
& 50.98 
& 86.74 
& 90.86 \\

w/o Latent Injection
& 88.96 
& 81.67 
& 66.81 
& 81.92 
& 78.44 
& 51.76 
& 85.98 
& 87.35 \\

w/o HLMT 
& 83.25 
& 78.64 
& 58.92 
& 73.41 
& 69.88 
& 44.53 
& 82.47 
& 86.39 \\

\hline
\end{tabular}
}
\label{tab:ablation}
\end{table*}



\begin{table*}[t]
\centering
\caption{Quantitative comparison of models on multi-objective, multi-hop QA tasks. Arrows indicate the optimal direction for each metric. Underlined entries denote catastrophic model collapse (i.e., extremely low performance). Variants marked with \textit{(truncate)} adopt ActiveMem's prompt and rollout pipeline with context truncation. Variants labeled with \textit{(A-MEM)} employ the prompt, rollout pipeline, and the external memory module from ActiveMem \cite{xu2026mem}. \textbf{ActiveMem-QA} refers to our proposed ActiveMem optimized specifically on the 2-objective QA.}
\resizebox{\linewidth}{!}{%
\setlength{\tabcolsep}{4pt}
\begin{tabular}{lcccccccccccc}
\toprule
\multirow{2}{*}{Model} & \multicolumn{4}{c}{2-Objective} & \multicolumn{4}{c}{8-Objective} & \multicolumn{4}{c}{16-Objective} \\
\cmidrule(lr){2-5} \cmidrule(lr){6-9} \cmidrule(lr){10-13}
& EM $\uparrow$ & F1 $\uparrow$ & Peak ($\times 10^2$) $\downarrow$ & Time (s) $\downarrow$ & EM $\uparrow$ & F1 $\uparrow$ & Peak ($\times 10^2$) $\downarrow$ & Time (s) $\downarrow$ & EM $\uparrow$ & F1 $\uparrow$ & Peak ($\times 10^2$) $\downarrow$ & Time (s) $\downarrow$ \\
\midrule
Qwen2.5-14B-Inst & 0.732 & 0.902 & 15.6$\pm$0.19 & 5.49 $\pm$ 0.16 & 1.55 & 1.87 & 44.7 $\pm$ 0.37 & 16.2 $\pm$ 0.27 & 0.567 & 0.703 & 38.4$\pm$0.71 & 29.7$\pm$0.75 \\
Qwen2.5-7B-Inst & 0.268 & 0.366 & 19.6$\pm$0.33 & 4.60$\pm$0.08 & 0.87 & 1.10 & 49.5$\pm$0.40 & 13.9$\pm$0.18 & 0.165 & 0.213 & 43.3$\pm$0.62 & 15.5$\pm$0.23 \\
Qwen2.5-7B-Inst (A-MEM) & 0.286 & 0.371 & 14.1$\pm$0.10 & 24.6$\pm$0.51 & 1.13 & 1.43 & 18.6$\pm$0.10 & 53.7$\pm$1.26 & 0.730 & 0.961 & 18.8$\pm$0.14 & 91.2$\pm$2.44 \\
Qwen2.5-7B-Inst (truncate) & 0.262 & 0.336 & 8.28$\pm$0.06 & 5.89$\pm$0.16 & 0.97 & 1.23 & 11.8$\pm$0.10 & 11.9$\pm$0.20 & 0.396 & 0.497 & 13.3$\pm$0.16 & 22.1$\pm$0.60 \\
Search-R1 & 0.452 & 0.531 & 13.0$\pm$0.08 & 4.09 $\pm$ 0.23 & \underline{0.064} & \underline{0.08} & \underline{24.7 $\pm$ 0.19} & \underline{4.25$\pm$0.16} & \underline{0.009} & \underline{0.011} & \underline{20.9$\pm$0.03} & \underline{4.75$\pm$0.18} \\
DeepResearcher & 0.536 & 0.650 & 22.0$\pm$0.43 & \textbf{4.01$\pm$0.07} & 0.73 & 0.90 & 51.8$\pm$0.35 & 11.3$\pm$0.14 & \underline{0.071} & \underline{0.106} & \underline{48.9$\pm$0.66} & \underline{15.8$\pm$0.19} \\
MEM1-QA & 0.709 & 0.838 & \textbf{6.40$\pm$0.02} & 6.49 $\pm$ 0.07 &1.87 & 2.31 & \textbf{8.01$\pm$0.06} & \textbf{8.68$\pm$0.12} & 1.97 & 2.39 &\textbf{10.4$\pm$0.09} & \textbf{8.70$\pm$0.12}  \\
ActiveMem-QA & \textbf{0.753} & \textbf{0.925} & 6.68$\pm$0.04 & 6.84 $\pm$ 0.05 & \textbf{2.39} & \textbf{2.84} & {8.83$\pm$0.04} & 9.13$\pm$0.10 & \textbf{2.83} & \textbf{3.58} & 11.5$\pm$0.04 & 9.32$\pm$0.08 \\
\bottomrule
\end{tabular}
}
\label{tab:multi_qa}
\end{table*}
\subsection{Experimental Setup}
\paragraph{Evaluation and Benchmarks.}
To comprehensively evaluate the generalizability of our method, we follow the experimental setup of \citet{zhang2025memgen} and benchmark across nine datasets spanning five core domains. Specifically, these include web search (TriviaQA \citep{joshi2017triviaqa}, PopQA \citep{mallen2023not} and FEVER~\citep{zhang2025memgen}), embodied action (ALFWorld \citep{shridhar2020alfworld} and ScienceWorld~\citep{zhang2025memgen}), mathematical reasoning (GSM8K \citep{cobbe2021training} and MATH \citep{hendrycks2021measuring}), scientific reasoning (GPQA \citep{rein2023gpqa}), and code generation (KodCode \citep{xu2025kodcode} and BigCodeBench \citep{jain2025livecodebench}).

\paragraph{Baselines.} We evaluate ActiveMem against twelve baselines across four categories: 
(I) \textbf{Prompt-based methods}, including the Vanilla model and CoT \citep{wei2022chain}; 
(II) \textbf{Parametric memory}, which incorporates experiences directly into model weights via SFT, GRPO, REINFORCE \citep{williams1992simple}, REINFORCE++ \citep{hu2025reinforce++}, and Agent-FLAN \citep{chen2024agent}; 
(III) \textbf{Retrieval-based memory}, which utilizes external databases for sequential experience storage, represented by MemoryBank \citep{zhong2024memorybank}, ExpeL \citep{zhao2024expel}, and Agent Workflow Memory (AWM) \citep{wang2024agent}; and 
(IV) \textbf{Latent computation}, which employs latent tokens as experience carriers, such as SoftCoT \citep{xu2025softcot++}, Co-processor \citep{liu2024deliberation} and MemGen~\citep{zhang2025memgen}.

\paragraph{Implementation Details.} 
We employ several LLM backbones of varying scales, including Qwen-2.5-1.5B \citep{yang2024aqwen25}, SmolLM3-3B \citep{huggingface2025smollm3}, and Qwen3-8B \citep{yang2025qwen3}. The length of the latent memory sequence $K$ is chosen from $\{2, 4, 8\}$. ActiveMem does not rely on a specific optimization algorithm, so we implement two variants: \textbf{ActiveMem}$_\text{SFT}$, trained with supervised fine-tuning on interaction trajectories, and \textbf{ActiveMem}$_\text{GRPO}$, further optimized using reinforcement learning with task rewards.
 Comprehensive details on these variants, training setups, and hyperparameter configurations are provided in Appendix.

\subsection{Main Results} 

To comprehensively validate the effectiveness of our approach, we evaluate ActiveMem across a diverse suite of benchmarks. As shown in Table~\ref{tab:memgen}, ActiveMem achieves substantial performance gains across all domains, demonstrating that our method is highly effective not only in long-horizon interactive environments but also yields state-of-the-art results on static and short-horizon reasoning tasks. It successfully overcomes the limitations of passive retrieval methods (which bottleneck on reasoning tasks) and static parametric finetuning (which struggle with dynamic knowledge application). By introducing a decoupled memory policy ($\pi_\phi$) trained via GRPO, ActiveMem dynamically manages a latent memory tree—handling task anchoring, branching, and utility-driven pruning—while keeping the LLM backbone frozen. 
Beyond absolute performance gains, ActiveMem demonstrates a remarkable cross-scale leapfrogging capability. By structuring context management through hierarchical memory trees, our framework effectively compensates for the limited internalized knowledge capacity of smaller models.

\subsection{Generalization Experiments} 

To rigorously assess cross-domain transferability, we train models on a single source dataset and evaluate them across structurally divergent environments, such as ScienceWorld and FEVER (Figure~\ref{pic}). Standard baselines (e.g., SFT, MemoryBank) exhibit severe domain overfitting, as they are inherently bottlenecked by surface-level semantic matching and tend to memorize dataset-specific interaction trajectories. Even advanced generative memory methods like MemGen yield only marginal gains when faced with significant domain shifts. 
In contrast, ActiveMem intrinsically bypasses this limitation. By abstracting raw trajectories into dependency-aware latent nodes, our framework captures the underlying procedural logic rather than domain-specific vocabulary. Whether transferring from the algorithmic logic of KodCode to the deductive reasoning of MATH, or transitioning from embodied navigation in ALFWorld to evidence retrieval in FEVER, ActiveMem seamlessly generalizes its learned topological operations (e.g., temporal branching, utility-driven pruning). This robust transferability demonstrates that our decoupled memory controller ($\pi_\phi$), optimized via GRPO, successfully internalizes a universal cognitive strategy for long-horizon context management, effectively preventing catastrophic performance degradation in unseen environments (Figure~\ref{gen}).

\begin{figure}[t]
  \centering
  \includegraphics[width=1\linewidth]{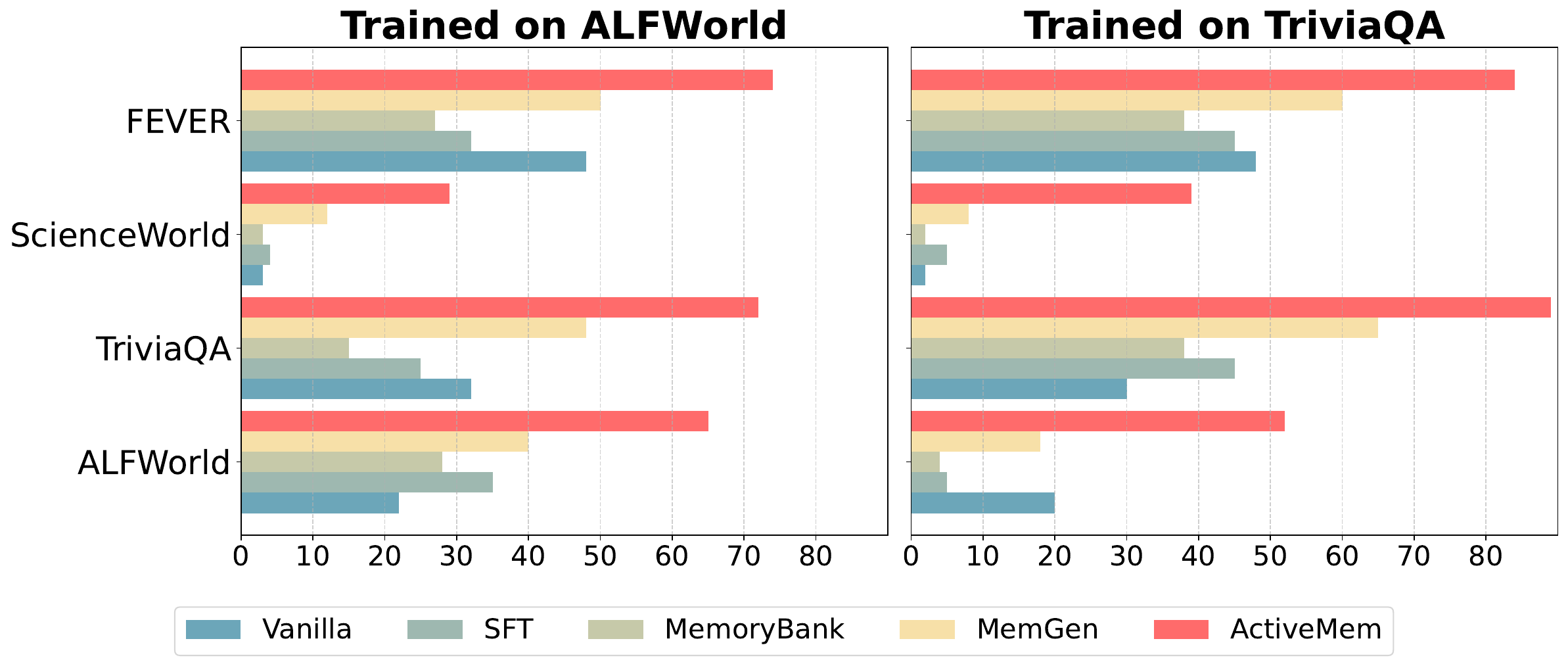}
\caption{Generalization study of ActiveMem. We train ActiveMem SFT on a single dataset (either ALFWorld or TriviaQA) and evaluate it across four diverse datasets: TriviaQA, ALFWorld, ScienceWorld, and FEVER.}
  \label{pic}
\end{figure}

\begin{figure}[t]
  \centering
  \includegraphics[width=1\linewidth]{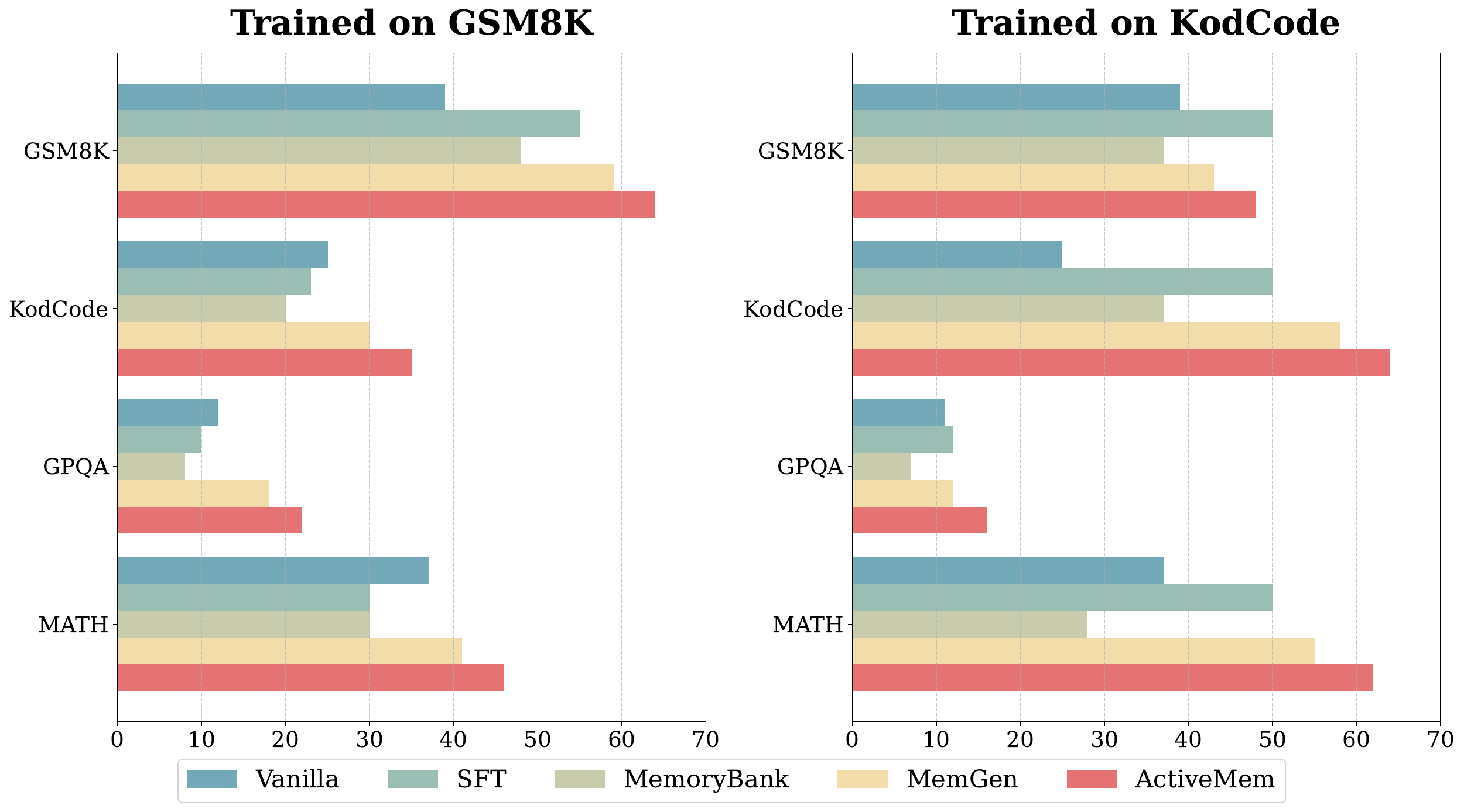}
  \caption{The generalization study. We train ActiveMem on GSM8K or KodCode and evaluate it on all four datasets.}
  \label{gen}
\end{figure}

\begin{figure*}[t]
  \centering
  \includegraphics[width=1\linewidth]{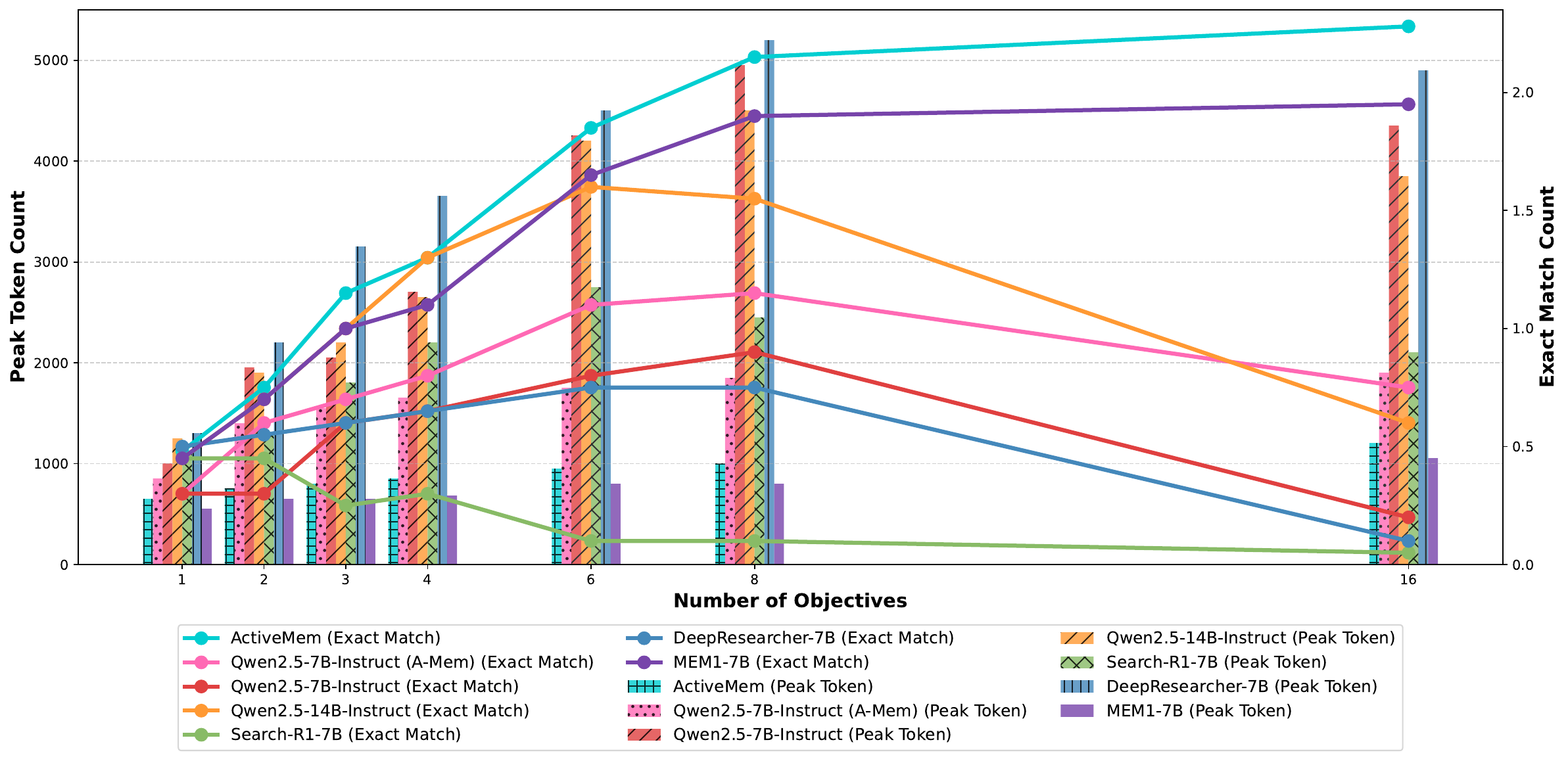}
  \caption{Scaling trajectories of task performance and memory efficiency as a function of task complexity (number of objectives). ActiveMem (trained on the 2-objective QA dataset) consistently outperforms all baselines while maintaining a nearly constant memory footprint. Note that the apparent plateau in peak token usage for baseline models at 16 objectives is an artifact of catastrophic model collapse (e.g., premature trajectory termination), rather than an indication of sustained efficiency.}
  \label{multi_hop}
\end{figure*}

\subsection{Ablation Study}

We conduct ablation experiments on the Qwen3-8B backbone to evaluate the individual contribution of each core component in ActiveMem. As shown in Table~\ref{tab:ablation}, removing any module consistently degrades performance across all domains. This validates that our gains are not driven by a single heuristic, but rather by the deeply integrated synergy of hierarchical memory organization, adaptive topology evolution, latent reasoning modulation, and hierarchical reinforcement optimization.
Among all variants, removing the Hierarchical Latent Memory Tree (\textit{w/o HLMT}) triggers the most catastrophic performance collapse across nearly all benchmarks, with GPQA plunging from 54.49\% to 44.53\% and BigCodeBench from 84.39\% to 69.88\%. This confirms that hierarchical structural organization is the indispensable foundation of ActiveMem, enabling scalable retrieval and progressive abstraction that fundamentally avoids the context fragmentation inherent in flat memory storage. 
Disabling latent memory injection (\textit{w/o Latent Injection}) also significantly impairs reasoning capabilities. This demonstrates that merely selecting the correct historical path is insufficient; the retrieved memories must continuously and fundamentally steer the model's internal hidden-state dynamics to effectively influence autoregressive decoding. Similarly, omitting the hierarchical advantage mechanism (\textit{w/o Hierarchical Advantage}) consistently harms performance, echoing our hypothesis that sparse, trajectory-level final rewards alone are inadequate for stable memory evolution. Hierarchical credit assignment is crucial for providing fine-grained optimization signals to intermediate tree nodes.
Finally, restricting the model's ability to edit memory topology (\textit{w/o Tree Action Head}) leads to substantial degradation, indicating that a static hierarchy cannot continually adapt to unfolding reasoning patterns. Removing only the merge and delete operations (\textit{w/o Merge/Delete}) results in relatively smaller but strictly consistent declines. This is theoretically sound, as these operations primarily target long-term computational efficiency rather than immediate reasoning execution. Nonetheless, their absence still hurts overall accuracy, underscoring that proactive redundancy consolidation and low-utility pruning are essential for preventing noise accumulation over extended interaction horizons.

\subsection{Efficiency Analysis}

ActiveMem excels at managing long-horizon interactions through structured memory evolution. To evaluate this capability, we extend QA into multi-objective tasks that require progressively longer reasoning trajectories and sustained memory consistency. As shown in Table~\ref{tab:multi_qa}, ActiveMem surpasses 7B baselines on 2-objective tasks while using substantially fewer tokens and lower inference cost. More importantly, its advantage grows consistently with task complexity. Across 3–16 objectives (Figure~\ref{multi_hop}), conventional methods exhibit near-linear growth in Peak Token Usage, whereas ActiveMem maintains a lower memory footprint, demonstrating substantially better scalability under extended reasoning horizons. This efficiency further translates into strong cross-scale generalization: although initially trailing the larger Qwen2.5-14B-Instruct on shorter tasks, ActiveMem becomes increasingly robust as task horizons expand and eventually surpasses the 14B model. On the challenging 16-objective benchmark, ActiveMem achieves superior performance while using only 27.1\% of the peak tokens and 29.3\% of the inference time of the 14B baseline, substantially reducing both GPU memory consumption and computational overhead for complex multi-step reasoning.

\section{Conclusion}

We propose ActiveMem, a hierarchical memory framework that addresses context fragmentation and cross-task interference, in long-horizon LLM agents by organizing interaction histories into dynamically evolving latent memory trees. Instead of relying on flat memory buffers, ActiveMem employs an RL-optimized decoupled controller to jointly learn hierarchical memory organization, topology-aware retrieval, latent reasoning injection, and adaptive memory maintenance. Extensive experiments demonstrate consistent state-of-the-art performance with substantially improved memory efficiency.

\bibliography{aaai2027}

@String{Computer = "{IEEE} Computer" }

@String{Springer = "Springer-Verlag" }

@article{zeng2025glm,
  title={Glm-4.5: Agentic, reasoning, and coding (arc) foundation models},
  author={Zeng, Aohan and Lv, Xin and Zheng, Qinkai and Hou, Zhenyu and Chen, Bin and Xie, Chengxing and Wang, Cunxiang and Yin, Da and Zeng, Hao and Zhang, Jiajie and others},
  journal={arXiv preprint arXiv:2508.06471},
  year={2025}
}

@article{yang2025qwen3,
  title={Qwen3 technical report},
  author={Yang, An and Li, Anfeng and Yang, Baosong and Zhang, Beichen and Hui, Binyuan and Zheng, Bo and Yu, Bowen and Gao, Chang and Huang, Chengen and Lv, Chenxu and others},
  journal={arXiv preprint arXiv:2505.09388},
  year={2025}
}

@article{wang2023voyager,
  title={Voyager: An open-ended embodied agent with large language models},
  author={Wang, Guanzhi and Xie, Yuqi and Jiang, Yunfan and Mandlekar, Ajay and Xiao, Chaowei and Zhu, Yuke and Fan, Linxi and Anandkumar, Anima},
  journal={arXiv preprint arXiv:2305.16291},
  year={2023}
}

@article{chu2026redsearcher,
  title={Redsearcher: A scalable and cost-efficient framework for long-horizon search agents},
  author={Chu, Zheng and Wang, Xiao and Hong, Jack and Fan, Huiming and Huang, Yuqi and Yang, Yue and Xu, Guohai and Zhao, Chenxiao and Xiang, Cheng and Hu, Shengchao and others},
  journal={arXiv preprint arXiv:2602.14234},
  year={2026}
}

@article{xu2026agent,
  title={Agent skills for large language models: Architecture, acquisition, security, and the path forward},
  author={Xu, Renjun and Yan, Yang},
  journal={arXiv preprint arXiv:2602.12430},
  year={2026}
}

@inproceedings{hu2025hiagent,
  title={Hiagent: Hierarchical working memory management for solving long-horizon agent tasks with large language model},
  author={Hu, Mengkang and Chen, Tianxing and Chen, Qiguang and Mu, Yao and Shao, Wenqi and Luo, Ping},
  booktitle={Proceedings of the 63rd Annual Meeting of the Association for Computational Linguistics (Volume 1: Long Papers)},
  pages={32779--32798},
  year={2025}
}

@inproceedings{erdoganplan,
  title={Plan-and-Act: Improving Planning of Agents for Long-Horizon Tasks},
  author={Erdogan, Lutfi Eren and Lee, Nicholas and Kim, Sehoon and Moon, Suhong and Furuta, Hiroki and Anumanchipalli, Gopala and Keutzer, Kurt and Gholami, Amir},
  booktitle={Forty-second International Conference on Machine Learning},
  year={2026}
}

@article{xi2025rise,
  title={The rise and potential of large language model based agents: A survey},
  author={Xi, Zhiheng and Chen, Wenxiang and Guo, Xin and He, Wei and Ding, Yiwen and Hong, Boyang and Zhang, Ming and Wang, Junzhe and Jin, Senjie and Zhou, Enyu and others},
  journal={Science China Information Sciences},
  volume={68},
  number={2},
  pages={121101},
  year={2025},
  publisher={Springer}
}

@article{luo2025large,
  title={Large language model agent: A survey on methodology, applications and challenges},
  author={Luo, Junyu and Zhang, Weizhi and Yuan, Ye and Zhao, Yusheng and Yang, Junwei and Gu, Yiyang and Wu, Bohan and Chen, Binqi and Qiao, Ziyue and Long, Qingqing and others},
  journal={arXiv preprint arXiv:2503.21460},
  year={2025}
}

@article{liu2025advances,
  title={Advances and challenges in foundation agents: From brain-inspired intelligence to evolutionary, collaborative, and safe systems},
  author={Liu, Bang and Li, Xinfeng and Zhang, Jiayi and Wang, Jinlin and He, Tanjin and Hong, Sirui and Liu, Hongzhang and Zhang, Shaokun and Song, Kaitao and Zhu, Kunlun and others},
  journal={arXiv preprint arXiv:2504.01990},
  year={2025}
}

@article{zhang2025web,
  title={From web search towards agentic deep research: Incentivizing search with reasoning agents},
  author={Zhang, Weizhi and Li, Yangning and Bei, Yuanchen and Luo, Junyu and Wan, Guancheng and Yang, Liangwei and Xie, Chenxuan and Yang, Yuyao and Huang, Wei-Chieh and Miao, Chunyu and others},
  journal={arXiv preprint arXiv:2506.18959},
  year={2025}
}

@article{packer2023memgpt,
  title={MemGPT: towards LLMs as operating systems.},
  author={Packer, Charles and Fang, Vivian and Patil, Shishir\_G and Lin, Kevin and Wooders, Sarah and Gonzalez, Joseph\_E},
  year={2023},
  publisher={ArXiv}
}

@inproceedings{wu2025agentic,
  title={Agentic reasoning: A streamlined framework for enhancing llm reasoning with agentic tools},
  author={Wu, Junde and Zhu, Jiayuan and Liu, Yuyuan and Xu, Min and Jin, Yueming},
  booktitle={Proceedings of the 63rd Annual Meeting of the Association for Computational Linguistics (Volume 1: Long Papers)},
  pages={28489--28503},
  year={2025}
}

@article{lu2025scaling,
  title={Scaling llm multi-turn rl with end-to-end summarization-based context management},
  author={Lu, Miao and Sun, Weiwei and Du, Weihua and Ling, Zhan and Yao, Xuesong and Liu, Kang and Chen, Jiecao},
  journal={arXiv preprint arXiv:2510.06727},
  year={2025}
}

@article{zheng2025newtonbench,
  title={Newtonbench: Benchmarking generalizable scientific law discovery in llm agents},
  author={Zheng, Tianshi and Tam, Kelvin Kiu-Wai and Nguyen, Newt Hue-Nam K and Xu, Baixuan and Wang, Zhaowei and Cheng, Jiayang and Tsang, Hong Ting and Wang, Weiqi and Bai, Jiaxin and Fang, Tianqing and others},
  journal={arXiv preprint arXiv:2510.07172},
  year={2025}
}

@article{xi2025agentgym,
  title={Agentgym-rl: Training llm agents for long-horizon decision making through multi-turn reinforcement learning},
  author={Xi, Zhiheng and Huang, Jixuan and Liao, Chenyang and Huang, Baodai and Guo, Honglin and Liu, Jiaqi and Zheng, Rui and Ye, Junjie and Zhang, Jiazheng and Chen, Wenxiang and others},
  journal={arXiv preprint arXiv:2509.08755},
  year={2025}
}

@article{xu2026mem,
  title={A-mem: Agentic memory for llm agents},
  author={Xu, Wujiang and Liang, Zujie and Mei, Kai and Gao, Hang and Tan, Juntao and Zhang, Yongfeng},
  journal={Advances in Neural Information Processing Systems},
  volume={38},
  pages={17577--17604},
  year={2026}
}

@article{yao2022webshop,
  title={Webshop: Towards scalable real-world web interaction with grounded language agents},
  author={Yao, Shunyu and Chen, Howard and Yang, John and Narasimhan, Karthik},
  journal={Advances in Neural Information Processing Systems},
  volume={35},
  pages={20744--20757},
  year={2022}
}

@article{jin2025search,
  title={Search-r1: Training llms to reason and leverage search engines with reinforcement learning},
  author={Jin, Bowen and Zeng, Hansi and Yue, Zhenrui and Yoon, Jinsung and Arik, Sercan and Wang, Dong and Zamani, Hamed and Han, Jiawei},
  journal={arXiv preprint arXiv:2503.09516},
  year={2025}
}

@inproceedings{sheng2025hybridflow,
  title={Hybridflow: A flexible and efficient rlhf framework},
  author={Sheng, Guangming and Zhang, Chi and Ye, Zilingfeng and Wu, Xibin and Zhang, Wang and Zhang, Ru and Peng, Yanghua and Lin, Haibin and Wu, Chuan},
  booktitle={Proceedings of the Twentieth European Conference on Computer Systems},
  pages={1279--1297},
  year={2025}
}

@inproceedings{zhao2025swift,
  title={Swift: a scalable lightweight infrastructure for fine-tuning},
  author={Zhao, Yuze and Huang, Jintao and Hu, Jinghan and Wang, Xingjun and Mao, Yunlin and Zhang, Daoze and Jiang, Zeyinzi and Wu, Zhikai and Ai, Baole and Wang, Ang and others},
  booktitle={Proceedings of the AAAI Conference on Artificial Intelligence},
  volume={39},
  number={28},
  pages={29733--29735},
  year={2025}
}

@inproceedings{kwon2023efficient,
  title={Efficient memory management for large language model serving with pagedattention},
  author={Kwon, Woosuk and Li, Zhuohan and Zhuang, Siyuan and Sheng, Ying and Zheng, Lianmin and Yu, Cody Hao and Gonzalez, Joseph and Zhang, Hao and Stoica, Ion},
  booktitle={Proceedings of the 29th symposium on operating systems principles},
  pages={611--626},
  year={2023}
}

@article{douze2025faiss,
  title={The faiss library},
  author={Douze, Matthijs and Guzhva, Alexandr and Deng, Chengqi and Johnson, Jeff and Szilvasy, Gergely and Mazar{\'e}, Pierre-Emmanuel and Lomeli, Maria and Hosseini, Lucas and J{\'e}gou, Herv{\'e}},
  journal={IEEE Transactions on Big Data},
  year={2025},
  publisher={IEEE}
}

@inproceedings{karpukhin2020dense,
  title={Dense passage retrieval for open-domain question answering},
  author={Karpukhin, Vladimir and Oguz, Barlas and Min, Sewon and Lewis, Patrick and Wu, Ledell and Edunov, Sergey and Chen, Danqi and Yih, Wen-tau},
  booktitle={Proceedings of the 2020 conference on empirical methods in natural language processing (EMNLP)},
  pages={6769--6781},
  year={2020}
}

@misc{serper2025api,
  author       = {{Serper}},
  title        = {Serper API: Fast and Affordable Google Search API},
  year         = {2025},
  howpublished = {\url{https://serper.dev/}},
  note         = {Accessed: 2025-05-15}
}

@article{wang2022text,
  title={Text embeddings by weakly-supervised contrastive pre-training},
  author={Wang, Liang and Yang, Nan and Huang, Xiaolong and Jiao, Binxing and Yang, Linjun and Jiang, Daxin and Majumder, Rangan and Wei, Furu},
  journal={arXiv preprint arXiv:2212.03533},
  year={2022}
}

@inproceedings{zheng2025deepresearcher,
  title={Deepresearcher: Scaling deep research via reinforcement learning in real-world environments},
  author={Zheng, Yuxiang and Fu, Dayuan and Hu, Xiangkun and Cai, Xiaojie and Ye, Lyumanshan and Lu, Pengrui and Liu, Pengfei},
  booktitle={Proceedings of the 2025 Conference on Empirical Methods in Natural Language Processing},
  pages={414--431},
  year={2025}
}

@article{zhou2025mem1,
  title={Mem1: Learning to synergize memory and reasoning for efficient long-horizon agents},
  author={Zhou, Zijian and Qu, Ao and Wu, Zhaoxuan and Kim, Sunghwan and Prakash, Alok and Rus, Daniela and Zhao, Jinhua and Low, Bryan Kian Hsiang and Liang, Paul Pu},
  journal={arXiv preprint arXiv:2506.15841},
  year={2025}
}

@inproceedings{yeagentfold,
  title={AgentFold: Long-Horizon Web Agents with Proactive Context Folding},
  author={Ye, Rui and Zhang, Zhongwang and Li, Kuan and Yin, Huifeng and Tao, Zhengwei and Zhao, Yida and Su, Liangcai and Zhang, Liwen and Qiao, Zile and Wang, Xinyu and others},
  booktitle={The Fourteenth International Conference on Learning Representations},
  year={2026}
}

@article{wei2025browsecomp,
  title={Browsecomp: A simple yet challenging benchmark for browsing agents},
  author={Wei, Jason and Sun, Zhiqing and Papay, Spencer and McKinney, Scott and Han, Jeffrey and Fulford, Isa and Chung, Hyung Won and Passos, Alex Tachard and Fedus, William and Glaese, Amelia},
  journal={arXiv preprint arXiv:2504.12516},
  year={2025}
}

@inproceedings{zhong2024memorybank,
  title={Memorybank: Enhancing large language models with long-term memory},
  author={Zhong, Wanjun and Guo, Lianghong and Gao, Qiqi and Ye, He and Wang, Yanlin},
  booktitle={Proceedings of the AAAI conference on artificial intelligence},
  volume={38},
  number={17},
  pages={19724--19731},
  year={2024}
}

@article{zhang2025memgen,
  title={Memgen: Weaving generative latent memory for self-evolving agents},
  author={Zhang, Guibin and Fu, Muxin and Yan, Shuicheng},
  booktitle={The Fourteenth International Conference on Learning Representations},
  year={2026}
}

@article{liu2024deliberation,
  title={Deliberation in latent space via differentiable cache augmentation},
  author={Liu, Luyang and Pfeiffer, Jonas and Wu, Jiaxing and Xie, Jun and Szlam, Arthur},
  journal={arXiv preprint arXiv:2412.17747},
  year={2024}
}

@article{xu2025softcot++,
  title={Softcot++: Test-time scaling with soft chain-of-thought reasoning},
  author={Xu, Yige and Guo, Xu and Zeng, Zhiwei and Miao, Chunyan},
  journal={arXiv preprint arXiv:2505.11484},
  year={2025}
}

@article{wang2024agent,
  title={Agent workflow memory},
  author={Wang, Zora Zhiruo and Mao, Jiayuan and Fried, Daniel and Neubig, Graham},
  journal={arXiv preprint arXiv:2409.07429},
  year={2024}
}

@inproceedings{zhao2024expel,
  title={Expel: Llm agents are experiential learners},
  author={Zhao, Andrew and Huang, Daniel and Xu, Quentin and Lin, Matthieu and Liu, Yong-Jin and Huang, Gao},
  booktitle={Proceedings of the AAAI Conference on Artificial Intelligence},
  volume={38},
  number={17},
  pages={19632--19642},
  year={2024}
}

@inproceedings{jain2025livecodebench,
  title={Livecodebench: Holistic and contamination free evaluation of large language models for code},
  author={Jain, Naman and Gu, Alex and Li, Wen-Ding and Yan, Fanjia and Zhang, Tianjun and Wang, Sida and Solar-Lezama, Armando and Sen, Koushik and Stoica, Ion},
  booktitle={International Conference on Learning Representations},
  volume={2025},
  pages={58791--58831},
  year={2025}
}

@inproceedings{chen2024agent,
  title={Agent-flan: Designing data and methods of effective agent tuning for large language models},
  author={Chen, Zehui and Liu, Kuikun and Wang, Qiuchen and Zhang, Wenwei and Liu, Jiangning and Lin, Dahua and Chen, Kai and Zhao, Feng},
  booktitle={Findings of the Association for Computational Linguistics: ACL 2024},
  pages={9354--9366},
  year={2024}
}

@article{hu2025reinforce++,
  title={Reinforce++: An efficient rlhf algorithm with robustness to both prompt and reward models},
  author={Hu, Jian and Liu, Jason Klein and Xu, Haotian and Shen, Wei},
  journal={arXiv preprint arXiv:2501.03262},
  volume={1},
  number={3},
  pages={5},
  year={2025}
}

@article{guo2025deepseek,
  title={Deepseek-r1: Incentivizing reasoning capability in llms via reinforcement learning},
  author={Guo, Daya and Yang, Dejian and Zhang, Haowei and Song, Junxiao and Wang, Peiyi and Zhu, Qihao and Xu, Runxin and Zhang, Ruoyu and Ma, Shirong and Bi, Xiao and others},
  journal={arXiv preprint arXiv:2501.12948},
  year={2025}
}

@misc{huggingface2025smollm3,
  author    = {HuggingFace},
  title     = {SmolLM3: Smol, Multilingual, Long-Context Reasoner},
  year      = {2025},
  url       = {https://huggingface.co/blog/smollm3},
  urldate   = {2025-09-23}
}

@inproceedings{zhang2025lightthinker,
  title={Lightthinker: Thinking step-by-step compression},
  author={Zhang, Jintian and Zhu, Yuqi and Sun, Mengshu and Luo, Yujie and Qiao, Shuofei and Du, Lun and Zheng, Da and Chen, Huajun and Zhang, Ningyu},
  booktitle={Proceedings of the 2025 Conference on Empirical Methods in Natural Language Processing},
  pages={13318--13339},
  year={2025}
}

@article{williams1992simple,
  title={Simple statistical gradient-following algorithms for connectionist reinforcement learning},
  author={Williams, Ronald J},
  journal={Machine learning},
  volume={8},
  number={3},
  pages={229--256},
  year={1992},
  publisher={Springer}
}

@inproceedings{yang2024aqwen25,
  title={Qwen2.5 Technical Report},
  author={Yang, An and Yang, Baosong and Zhang, Beichen and Hui, Binyuan and Zheng, Bo and Yu, Bowen and Li, Chengyuan and Liu, Dayiheng and Huang, Fei and Wei, Haoran and others},
  booktitle={arXiv preprint arXiv:2412.15115},
  year={2024a}
}

@article{wei2022chain,
  title={Chain-of-thought prompting elicits reasoning in large language models},
  author={Wei, Jason and Wang, Xuezhi and Schuurmans, Dale and Bosma, Maarten and Xia, Fei and Chi, Ed and Le, Quoc V and Zhou, Denny and others},
  journal={Advances in neural information processing systems},
  volume={35},
  pages={24824--24837},
  year={2022}
}

@article{rein2023gpqa,
  title={Gpqa: A graduate-level google-proof q\&a benchmark},
  author={Rein, David and Hou, Betty Li and Stickland, Asa Cooper and Petty, Jackson and Pang, Richard Yuanzhe and Dirani, Julien and Michael, Julian and Bowman, Samuel R},
  journal={arXiv preprint arXiv:2311.12022},
  year={2023}
}

@inproceedings{xu2025kodcode,
  title={Kodcode: A diverse, challenging, and verifiable synthetic dataset for coding},
  author={Xu, Zhangchen and Liu, Yang and Yin, Yueqin and Zhou, Mingyuan and Poovendran, Radha},
  booktitle={Findings of the Association for Computational Linguistics: ACL 2025},
  pages={6980--7008},
  year={2025}
}

@inproceedings{joshi2017triviaqa,
  title={Triviaqa: A large scale distantly supervised challenge dataset for reading comprehension},
  author={Joshi, Mandar and Choi, Eunsol and Weld, Daniel S and Zettlemoyer, Luke},
  booktitle={Proceedings of the 55th Annual Meeting of the Association for Computational Linguistics (Volume 1: Long Papers)},
  pages={1601--1611},
  year={2017}
}

@article{hendrycks2021measuring,
  title={Measuring mathematical problem solving with the math dataset},
  author={Hendrycks, Dan and Burns, Collin and Kadavath, Saurav and Arora, Akul and Basart, Steven and Tang, Eric and Song, Dawn and Steinhardt, Jacob},
  journal={arXiv preprint arXiv:2103.03874},
  year={2021}
}

@article{cobbe2021training,
  title={Training verifiers to solve math word problems},
  author={Cobbe, Karl and Kosaraju, Vineet and Bavarian, Mohammad and Chen, Mark and Jun, Heewoo and Kaiser, Lukasz and Plappert, Matthias and Tworek, Jerry and Hilton, Jacob and Nakano, Reiichiro and others},
  journal={arXiv preprint arXiv:2110.14168},
  year={2021}
}

@article{shridhar2020alfworld,
  title={Alfworld: Aligning text and embodied environments for interactive learning},
  author={Shridhar, Mohit and Yuan, Xingdi and C{\^o}t{\'e}, Marc-Alexandre and Bisk, Yonatan and Trischler, Adam and Hausknecht, Matthew},
  journal={arXiv preprint arXiv:2010.03768},
  year={2020}
}

@inproceedings{mallen2023not,
  title={When not to trust language models: Investigating effectiveness of parametric and non-parametric memories},
  author={Mallen, Alex and Asai, Akari and Zhong, Victor and Das, Rajarshi and Khashabi, Daniel and Hajishirzi, Hannaneh},
  booktitle={Proceedings of the 61st annual meeting of the association for computational linguistics (volume 1: Long papers)},
  pages={9802--9822},
  year={2023}
}

@article{team2025openpangu,
  title={Openpangu deepdiver-v2: Multi-agent learning for deep information seeking, 2025},
  author={Team, OpenPangu},
  journal={URL https://ai. gitcode. com/ascend-tribe/openPangu-Embedded-7B-DeepDiver},
  year={2025}
}

@misc{openai2025adeepresearch,
  author    = {OpenAI},
  title     = {Deep Research System Card},
  year      = {2025a},
  url       = {https://cdn.openai.com/deep-research-system-card.pdf}
}

@misc{openai2025bo3o4mini,
  author    = {OpenAI},
  title     = {Introducing {OpenAI} o3 and o4-mini},
  year      = {2025b},
  url       = {https://openai.com/index/introducing-o3-and-o4-mini/}
}

@article{teamintroducing,
  title={Introducing deepseek-v3. 1: our first step toward the agent era!, 2025},
  author={Team, DeepSeek},
  journal={URL https://api-docs. deepseek. com/news/news250821}
}

@inproceedings{yao2022react,
  title={ReAct: Synergizing Reasoning and Acting in Language Models},
  author={Yao, Shunyu and Zhao, Jeffrey and Yu, Dian and Shafran, Izhak and Narasimhan, Karthik R and Cao, Yuan},
  booktitle={NeurIPS 2022 Foundation Models for Decision Making Workshop},
  year={2022}
}

@online{anthropic2025claude4,
  author       = {Anthropic},
  title        = {Introducing {Claude} 4},
  year         = {2025},
  url          = {https://www.anthropic.com/news/claude-4},
  urldate      = {2025-10-25}
}

@article{team2025kimi,
  title={Kimi k2: Open agentic intelligence},
  author={Team, Kimi and Bai, Yifan and Bao, Yiping and Charles, Y and Chen, Cheng and Chen, Guanduo and Chen, Haiting and Chen, Huarong and Chen, Jiahao and Chen, Ningxin and others},
  journal={arXiv preprint arXiv:2507.20534},
  year={2025}
}

@article{wong2025widesearch,
  title={Widesearch: Benchmarking agentic broad info-seeking},
  author={Wong, Ryan and Wang, Jiawei and Zhao, Junjie and Chen, Li and Gao, Yan and Zhang, Long and Zhou, Xuan and Wang, Zuo and Xiang, Kai and Zhang, Ge and others},
  journal={arXiv preprint arXiv:2508.07999},
  year={2025}
}

@article{shi2025pangu,
  title={Pangu deepdiver: Adaptive search intensity scaling via open-web reinforcement learning},
  author={Shi, Wenxuan and Tan, Haochen and Kuang, Chuqiao and Li, Xiaoguang and Ren, Xiaozhe and Zhang, Chen and Chen, Hanting and Wang, Yasheng and Shang, Lifeng and Yu, Fisher and others},
  journal={arXiv e-prints},
  pages={arXiv--2505},
  year={2025}
}

@article{liu2025webexplorer,
  title={Webexplorer: Explore and evolve for training long-horizon web agents},
  author={Liu, Junteng and Li, Yunji and Zhang, Chi and Li, Jingyang and Chen, Aili and Ji, Ke and Cheng, Weiyu and Wu, Zijia and Du, Chengyu and Xu, Qidi and others},
  journal={arXiv preprint arXiv:2509.06501},
  year={2025}
}

@article{gao2025beyond,
  title={Beyond ten turns: Unlocking long-horizon agentic search with large-scale asynchronous rl},
  author={Gao, Jiaxuan and Fu, Wei and Xie, Minyang and Xu, Shusheng and He, Chuyi and Mei, Zhiyu and Zhu, Banghua and Wu, Yi},
  journal={arXiv preprint arXiv:2508.07976},
  year={2025}
}

@misc{miromind2025mirothinker,
  title={Mirothinker: An open-source agentic model series trained for deep research and complex, long-horizon problem solving},
  author={MiroMind AI Team and others},
  year={2025}
}

@article{li2026webthinker,
  title={Webthinker: Empowering large reasoning models with deep research capability},
  author={Li, Xiaoxi and Jin, Jiajie and Dong, Guanting and Qian, Hongjin and Wu, Yongkang and Wen, Ji-Rong and Zhu, Yutao and Dou, Zhicheng},
  journal={Advances in Neural Information Processing Systems},
  volume={38},
  pages={120091--120131},
  year={2026}
}

@article{li2025websailor,
  title={Websailor: Navigating super-human reasoning for web agent},
  author={Li, Kuan and Zhang, Zhongwang and Yin, Huifeng and Zhang, Liwen and Ou, Litu and Wu, Jialong and Yin, Wenbiao and Li, Baixuan and Tao, Zhengwei and Wang, Xinyu and others},
  journal={arXiv preprint arXiv:2507.02592},
  year={2025}
}

@article{wu2026webdancer,
  title={Webdancer: Towards autonomous information seeking agency},
  author={Wu, Jialong and Li, Baixuan and Fang, Runnan and Yin, Wenbiao and Zhang, Liwen and Wang, Zhenglin and Tao, Zhengwei and Zhang, Ding-Chu and Xi, Zekun and Tang, Robert and others},
  journal={Advances in Neural Information Processing Systems},
  volume={38},
  pages={120957--120985},
  year={2026}
}

@inproceedings{mialon2024gaia,
  title={Gaia: a benchmark for general ai assistants},
  author={Mialon, Gr{\'e}goire and Fourrier, Cl{\'e}mentine and Wolf, Thomas and LeCun, Yann and Scialom, Thomas},
  booktitle={International Conference on Learning Representations},
  volume={2024},
  pages={9025--9049},
  year={2024}
}

@article{zhou2025browsecomp,
  title={Browsecomp-zh: Benchmarking web browsing ability of large language models in chinese},
  author={Zhou, Peilin and Leon, Bruce and Ying, Xiang and Zhang, Can and Shao, Yifan and Ye, Qichen and Chong, Dading and Jin, Zhiling and Xie, Chenxuan and Cao, Meng and others},
  journal={arXiv preprint arXiv:2504.19314},
  year={2025}
}

\section{Details}
\label{detail}
\subsection{Computing Resources and Training Details}
All training procedures for ActiveMem are executed on a computing cluster equipped with four NVIDIA H100 or H200 GPUs. We implement our reinforcement learning (RL) pipeline utilizing the veRL framework~\citep{sheng2025hybridflow}, while Supervised Fine-Tuning (SFT) is conducted via Swift~\citep{zhao2025swift}. During RL optimization, both the rollout data batch size and the mini-batch size are configured to 64. We employ learning rates of $10^{-6}$ for the actor network and $10^{-5}$ for the critic network, incorporating a linear warmup schedule over the initial 50 steps. The generation temperature is fixed at 1.0 during training to facilitate trajectory exploration, and drastically reduced to 0.01 during evaluation to ensure near-deterministic decoding. All evaluations are performed on a single NVIDIA H200 GPU, where models are deployed as an API service powered by the vLLM engine~\citep{kwon2023efficient}, explicitly leveraging automatic prefix caching to maximize inference throughput.

\subsection{RAG Configuration}
For the local Retrieval-Augmented Generation (RAG) pipeline, we deploy Faiss-GPU~\citep{douze2025faiss} coupled with an E5-Base embedding model~\citep{wang2022text} to index a standard Wikipedia 2018 dump~\citep{karpukhin2020dense}. To ensure equitable comparisons with baseline methods, the retrieval depth is strictly limited to the top-3 passages per query. For dynamic online searches, we leverage the Serper API~\citep{serper2025api} to execute Google queries, returning structured metadata including titles, URLs, and contextual snippets. The agent is consistently supplied with the top-10 search results as external context. Notably, to strictly evaluate snippet-level reasoning and filter out noisy HTML parsing, we intentionally restrict the agent's observation space to these search results, omitting the retrieval of full webpage content.

\subsection{Prompt}

\begin{prompt}{Multi-Objective Task (QA)}{}
You will answer multiple complex questions using iterative reasoning,
summarization, and web search.

At each step, you will see the questions, a cumulative summary of
relevant information, the current search query, and search results
(except in the first step, where only the questions are provided).

Your task is to:

1. Perform reasoning and update a cumulative, concise summary
within <think> ... </think>. This acts as persistent memory and
must include all essential information from previous <think> and
<information> tags.

2. Then choose one of the following actions:

- If any question remains unanswered, issue a single query for one
question inside <search> ... </search>. The query should consist of
keywords or a short phrase. Only search one question at a time.

- If all questions are answered, provide the final answers—separated
by semicolons—within <answer> answer1; answer2; ... </answer>. The
answers must be concise, contain only essential words, and avoid any
explanations.

Important:

- Always follow this structure after <information> or the initial
questions: <think> ... </think><search> ... </search> or <think>
... </think><answer> ... </answer>.

- Do not search multiple queries or questions simultaneously.
Answer the following questions:[QUESTIONS]
\end{prompt}

\begin{prompt}{Single-Objective Task (QA)}{}
You will answer a complex question through iterative reasoning,
summarization, and web searches.

At each step, you can see the question, previous summary in <think>
... </think>, search query in <search> ... </search>, and the
returned information in <information> ... </information> (except
the first step where you will be given only the question). Then, you
should:

1. Conduct reasoning, and then update a concise, cumulative summary
with essential information inside <think> </think>. This is your
persistent memory and should include all important information from
previous <think> </think> and <information> </information> (i.e.
information and answers already found for questions).
2. Then choose one:

- Issue a query (i.e., key words / phrases for search) inside <search>
</search> (you may search repeatedly until the answer is clear).
This query will be used to conduct search and return the results in
<information> results </information>

- Provide the final concise answer (no explanations) if no additional
information is needed inside <answer> </answer>. The answer should
be concise and only contain the words necessary to answer the
question.

After <information> </information> (or question at the beginning),
you should always follow the order: <think> ... </think><search> ...
</search> or <think> ... </think><answer> ... </answer>.
Question: [QUESTION]
\end{prompt}

\begin{prompt}{Single-Objective Task (WebShop)}{webprompt}
You are browsing an online shop. Your goal is to find a product
that matches the given description. You will interact with the site
step-by-step. Each step gives you a <state>...</state> representing
the current webpage. You must decide what action to take next until
you identify the correct product.

Available actions (shown in the <state> tag) depend on the page:

- On the search page: search[<keywords>]

- On search result pages: click[<item url>] to view a product, or
click[next >] to go to the next results page

- On product pages: click[description], click[features],
click[color], click[size], click[buy now]

- To return to search: click[back to search]

Example goal: "Find a gingko light and 20x20 pillow cover that is
hand painted." Example first action: <answer>search[gingko light
20x20 pillow cover hand painted]</answer> Only respond with valid
actions formatted as: search[...], click[...], etc.

After you navigate and find the product that best fits the user goal,
you should click[buy now] to buy the product at the product page when
the buy now button is available.

Product Description: [PRODUCT DESCRIPTION]
\end{prompt}

\subsection{WebShop Training Details}
\label{webshop}
To adapt ActiveMem for the WebShop environment, we strictly maintain the core rollout pipeline and policy optimization mechanisms utilized in the QA tasks. However, we introduce two necessary task-specific modifications. First, we employ a domain-tailored prompt template (see Prompt~\ref{prompt:webprompt}) that preserves the fundamental principles of memory consolidation while seamlessly integrating WebShop-specific navigation instructions. Second, unlike the QA setup which relies on Exact Match (EM) scores, we directly leverage WebShop's native, state-dependent reward function to derive the reinforcement signal during training. For dataset partitioning, we adhere strictly to the standard protocol established in the original benchmark~\citep{yao2022webshop}: the first 1,000 instances are reserved for testing, the subsequent 500 for validation, and the remainder constitute the training set.

\subsection{Implementation Details}
\subsection{Trajectory Construction Across Benchmarks}

Although several evaluation benchmarks (e.g., TriviaQA, PopQA, GSM8K, GPQA, MATH, KodCode, and BigCodeBench) are not interactive environments, ActiveMem adopts a unified trajectory formulation for both agentic and reasoning tasks. Specifically, each sample is treated as a sequential reasoning process, where autoregressive decoding naturally produces a trajectory of intermediate reasoning states. At each decoding step, the current hidden representation is regarded as the execution state, and the generated reasoning history forms the trajectory prefix used for memory retrieval and update. The Hierarchical Latent Memory Tree (HLMT) incrementally stores latent abstractions of these intermediate states, enabling later reasoning steps to retrieve dependency-consistent reasoning paths instead of isolated memory fragments. For embodied environments such as ALFWorld, trajectory states correspond to environment observations and actions, while for reasoning benchmarks they correspond to intermediate reasoning transitions during chain-of-thought generation. During training, memory evolution is optimized using the same task-level supervision (SFT) or trajectory-level reinforcement signal (RL) as the underlying backbone, making the memory optimization protocol consistent across both interactive agent environments and non-interactive reasoning benchmarks.

\subsubsection{Implementation Details on Thresholds}
Rather than relying on rigid hyperparameter tuning, ActiveMem employs thresholds primarily as feasible region constraints (action masking) for the RL policy. In our experiments, similarity-based thresholds ($\tau_{\mathrm{stop}}$ and $\tau_{\mathrm{merge}}$) are empirically initialized to $[0.5]$ and $[0.9]$, respectively. For topology maintenance, we adopt a dynamic percentile-based approach for $\tau_{\mathrm{del}}$, pruning the lowest $[5\%]$ of utility-scoring nodes per episode to ensure scale-invariant stability across diverse tasks. 

For a balanced tree with branching factor $b$ and total historical interactions $N$, the retrieval depth bounds the retrieval complexity to $O(b \log_b N)$, drastically reducing the $O(N)$ burden of flat memory. The overall spatial memory complexity remains $O(N)$. The external controller introduces minimal overhead ($O(d_{\mathrm{model}} \times d_{\mathrm{hidden}})$), while the latent injection introduces a fixed attention complexity overhead linearly proportional to the prefix length $L$.
The controller is parameterized as a 512-dimensional MLP.

\subsubsection{Metrics}
\paragraph{Exact match.} In QA tasks, we use exact match (EM) as both the verifiable reward for the RL
pipeline and the evaluation metric for the final output. The final response is extracted from between
<answer> and </answer>. In multi-objective settings, the response should contain answers to each
question separated by semicolons. If the XML tags are mismatched, or if the number of provided
answers does not correspond to the number of questions, a score of 0 is assigned. Otherwise, 1 point
is credited for each correct answer. During RL training, we do not provide any other intermediate
rewards or format penalties, as we find that such manual interventions can interfere with the agent’s
learning process.

\paragraph{F1 score.} 
The F1 score computes the harmonic mean between the precision $p$ and recall $r$. In the case of string matching, we split both the predicted answer and the ground truth. For example, if the ground truth is ``United States of America'', it is split into a list with lower-case words: ``united'', ``states'', ``of'', ``america''. The same works for the predicted answer. Then, denote the number of common words as $c$. Further denote the number of words in the predicted answer as $l$ and the number of words in the ground truth as $g$. Then, precision is calculated as $p := c/l$ and recall is calculated as $r := c/g$. The F1 score is finally computed as
\[
\text{F1} := 2 \times \frac{p \times r}{p + r}.
\]
If multiple ground truths are present, the maximum of all F1 scores is chosen. For multi-objective tasks, the final F1 is the sum of the F1 scores for each sub-question.

\paragraph{Peak token usage.} 
 Peak token usage is calculated as the maximum number of tokens (using
GPT-4o-mini tokenizer) in any single sequence throughout the agent’s entire trajectory. For fair
comparison in our experiments, the system prompt is excluded when computing this sequence length.
The peak token usage serves as a proxy for the inference-time memory requirement.

\paragraph{Inference time.} Inference time for each trajectory is recorded as the total elapsed time required to
generate the complete output trajectory. For all experiments, these measurements are conducted on a
single H200 GPU, operating with 10 concurrent threads. The vLLM inference framework is utilized,
with its automatic prefix caching feature enabled.

\paragraph{Dependency length.} 
Following \cite{zhang2025lightthinker}, the dependency metric is defined as the total number of historical tokens on which each generated token effectively depends. Let $T$ denote the total number of interaction steps. For each step $i \in [T]$, let $n_p^{(i)}$ be the number of prefix tokens and $n_o^{(i)}$ be the number of output tokens generated. The dependency metric is then calculated as
\[
\text{Dependency} := \sum_{i \in [T]} \frac{\left(2 n_o^{(i)} + n_p^{(i)}\right) \times n_o^{(i)}}{2}.
\]
At a high level, this metric quantifies the cumulative computational cost associated with the generation of an output trajectory. It is important to note that in MEM1, prefix tokens from previous steps are consolidated into a new internal state, rather than being continuously accumulated. In our experiments, we ignore the tokens in the system prompt when calculating the dependency metric.

\subsection{Training Dataset Setup}
We adopt the official training splits for all benchmark datasets except PopQA, which lacks a dedicated training partition. All utilized datasets provide essential supervision signals for model training. Since PopQA contains no official training set, we directly evaluate on its test split to examine cross-dataset generalization, using the model pre-trained on TriviaQA due to its high conceptual relevance to PopQA.

\subsection{Parameter Configurations}
Parameter configurations are shown in Table~\ref{tab:hyperparameters}.
\begin{table}[h]
\centering
\caption{Hyperparameters used in the training of ActiveMem.}
\label{tab:hyperparameters}
\resizebox{\linewidth}{!}{%
\begin{tabular}{l|l}
\hline
Settings & Hyperparameters \\
\hline
\multirow{6}{*}{Training (SFT)} 
& \texttt{train\_batch\_size} = 4 \\
& \texttt{learning\_rate} = 1e-5 \\
& \texttt{epochs} = 2 \\
& \texttt{warmup\_ratio} = 0.1 \\
& \texttt{optim} = \texttt{adamw\_torch} \\
& \texttt{schedular} = \texttt{cosine} \\
\hline
\multirow{8}{*}{Training (GRPO)}
& \texttt{rollout\_batch\_size} = 8 \\
& \texttt{train\_batch\_size} = 8 \\
& \texttt{epochs} = 2 \\
& \texttt{beta} = 0.4 \\
& \texttt{num\_iterations} = 1 \\
& \texttt{learning\_rate} = 1e-5 \\
& \texttt{warmup\_ratio} = 0.1 \\
& \texttt{optim} = \texttt{adamw\_torch} \\
& \texttt{schedular} = \texttt{cosine} \\
\hline
\multirow{6}{*}{LoRA}
& \texttt{r} = 16 \\
& \texttt{lora\_alpha} = 32 \\
& \texttt{target\_modules} = [\texttt{q\_proj}, \texttt{v\_proj}] \\
& \texttt{lora\_dropout} = 0.1 \\
& \texttt{bias} = \texttt{none} \\
& \texttt{task\_type} = \texttt{CAUSAL\_LM} \\
\hline
\multirow{3}{*}{Optimization}
& \texttt{adam\_offload} \\
& \texttt{flash\_attn} \\
& \texttt{deepspeed\_enable\_sleep} \\
\hline
\end{tabular}
}
\end{table}

\subsection{Baselines}
\paragraph{Search-R1.}
Search-R1~\citep{jin2025search} is a reinforcement learning framework that integrates large language models with real-time search engines for interleaved reasoning and retrieval. It models the search engine as part of the RL environment and supports multi‑turn query generation triggered by special tokens during step‑by‑step reasoning. To stabilize training, it applies loss masking to retrieved tokens and optimizes the policy using outcome‑based rewards without complex process supervision. Compatible with PPO and GRPO, it outperforms standard RAG and tool‑use methods on multiple question‑answering benchmarks. This approach enables LLMs to learn effective search and reasoning strategies purely from final outcome feedback. The model is trained on the 1-objective task with the same dataset. Search-R1 also uses exact match as its reward function. 

\paragraph{Deep Researcher.} 
DeepResearcher~\citep{zheng2025deepresearcher} is the first end-to-end reinforcement learning framework for training LLM research agents in real-world web environments. It replaces static RAG corpora with live, noisy web search and uses a multi-agent browsing architecture to extract information from diverse webpage structures. Trained with GRPO and outcome-based F1 rewards, it learns emergent behaviors including planning, cross-validation, reflection, and honest uncertainty expression. Experiments on seven open-domain QA benchmarks show large gains over prompt-based and RAG-based RL baselines, especially in out-of-distribution generalization. It resolves real-world challenges such as API limits, anti-crawling, and high-concurrency requests via distributed clusters and caching mechanisms. The model is trained on 1-objective task with a curated set
from various QA datasets including HotPotQA and Natural Questions. Deep Researcher adopts the
F1 score as the reward function.

\paragraph{MEM1.}
MEM1~\citep{zhou2025mem1} is an end-to-end reinforcement learning framework that enables LLM agents to maintain constant memory usage during long-horizon multi-turn tasks. It learns to iteratively update a compact internal state that unifies memory consolidation and reasoning, discarding irrelevant historical context after each step. To support training, it constructs scalable multi-objective environments by composing standard QA datasets into long-horizon tasks. Trained via PPO with masked trajectory optimization for stable policy learning, it delivers strong performance while drastically reducing memory cost and inference time across QA and web navigation tasks. It also generalizes to longer horizons unseen during training and exhibits emergent behaviors including parallel memory management, self-verification, and adaptive search planning.
MEM1 is trained exclusively on 2-objective tasks.

\subsection{Why GRPO Instead of PPO?}

We adopt Group Relative Policy Optimization (GRPO) rather than standard Proximal Policy Optimization (PPO) primarily for its exceptional memory and computational efficiency. Standard PPO requires maintaining a Critic (Value) model of comparable size to the policy, imposing prohibitive memory overheads for long-horizon agents. By sampling a group of trajectories for a single prompt and computing relative advantages within the group, GRPO entirely eliminates the need for a value network, which perfectly complements our framework's emphasis on a lightweight controller and highly efficient context management. Furthermore, this group-contrastive formulation naturally mitigates the notorious credit assignment problem in extended sequences, establishing an ideal algorithmic scaffold for integrating fine-grained, verifiable process supervision directly at intermediate trajectory nodes. Finally, because reward signals in multi-step tree retrieval are inherently sparse, GRPO’s internal group normalization mechanism effectively controls the high variance of long-trajectory returns. This stabilization ensures that the discrete topological updates to the memory tree—such as node insertions, merges, and deletions—evolve smoothly and coherently throughout the extended reasoning process.

\section{More Experiments}

\subsection{Results on Qwen2.5-1.5B}
Results are shown in Table~\ref{tab:qwen25_15b}.
\begin{table*}[t]
\centering
\caption{Results on Qwen2.5-1.5B. All values represent the performance metric for each task (e.g., accuracy \%). We highlight the best and second best results.}
\label{tab:qwen25_15b}
\resizebox{\linewidth}{!}{
\begin{tabular}{lcccccccc}
\toprule
Method & ALFWorld & TrivialQA & PopQA & KodCode & BigCodeBench & GPQA & GSM8K & MATH \\
\midrule
Vanilla & 22.54 & 32.10 & 16.08 & 24.55 & 40.35 & 11.62 & 39.51 & 36.63 \\
CoT & 18.30 & 28.67 & 18.39 & 32.32 & 38.59 & 15.67 & 56.79 & 45.22 \\
\cmidrule{1-9}
SFT & 36.57 & 63.84 & 39.20 & 55.83 & 37.72 & 11.11 & 54.83 & 38.84 \\
GRPO & 43.55 & 68.21 & 43.15 & 62.11 & 70.34 & 15.65 & 68.10 & 47.42 \\
REINFORCE & 43.25 & 66.50 & 41.87 & 60.20 & 67.80 & 12.50 & 67.40 & 46.89 \\
REINFORCE++ & 43.66 & 66.90 & 44.69 & 63.33 & 69.50 & 13.80 & 69.04 & 47.33 \\
Agent-FLAN & 35.80 & 64.28 & 38.90 & 56.21 & 43.83 & 9.35 & 53.02 & 29.82 \\
\cmidrule{1-9}
ExpeL & 28.96 & 25.20 & 20.20 & 31.15 & 39.78 & 8.12 & 45.12 & 38.12 \\
MemoryBank & 27.89 & 38.14 & 22.78 & 37.93 & 35.87 & 13.87 & 47.88 & 30.47 \\
AWM & 30.42 & 55.69 & 32.54 & - & - & - & - & - \\
\cmidrule{1-9}
SoftCoT & 33.07 & 62.22 & 38.78 & 55.13 & 36.10 & 9.31 & 54.50 & 38.55 \\
Co-processor & 35.66 & 64.78 & 40.12 & 56.65 & 38.10 & 12.12 & 57.12 & 37.40 \\
\text{MemGen}$_{\text{SFT}}$ & 40.30 & 65.02 & 41.28 & 58.16 & 42.47 & 18.28 & 58.15 & 47.12 \\
\text{MemGen}$_{\text{GRPO}}$ & 54.27 & 73.42 & 49.28 & 65.43 &72.81 & 18.18 & 73.39 & 53.36 \\
\cmidrule{1-9}
\textbf{ActiveMem}$_{\text{SFT}}$ & 55.12 & 75.28 & 52.71 & 67.35 & 75.02 & 19.41 & 70.35 & 56.52 \\
\textbf{ActiveMem}$_{\text{GRPO}}$ &\textbf{74.82} &\textbf{86.51} &\textbf{61.94} &\textbf{78.29} &\textbf{87.14} &\textbf{30.09} &\textbf{86.02} &\textbf{69.16} \\
\bottomrule
\end{tabular}
}
\end{table*}

\subsection{generalization study}
Results are shown in Figure~\ref{gen}.

\begin{figure*}[t]
  \centering
  \includegraphics[width=1\linewidth]{figures/gen.pdf}
  \caption{The generalization study. We train ActiveMem on GSM8K or KodCode and evaluate it on all four datasets }
  \label{gen}
\end{figure*}

\subsection{Continual Learning Result}
The results in Table~\ref{tab:continual} indicate three main findings. These findings suggest
that ActiveMem provides a more stable and transferable mechanism for continual learning.

\begin{table}[t]
\centering
\caption{Continual learning results of Qwen2.5-1.5B-Instruct across four reasoning and programming datasets (AQuA, GPQA, GSM8K, KodCode). The model is sequentially trained on each dataset (AQuA $\rightarrow$ GPQA $\rightarrow$ GSM8K $\rightarrow$ KodCode), and after each training stage, evaluation is conducted on all four benchmarks.}
\vspace{5pt}
\resizebox{\linewidth}{!}{
\begin{tabular}{llcccc}
\toprule
\multirow{2}{*}{Trained On} & \multirow{1}{*}{Method} & AQuA & GPQA & GSM8K & KodCode \\
\cline{2-6}
& Vanilla & 41.34 & 11.62 & 39.51 & 24.55 \\
\midrule
\multirow{4}{*}{AQuA}
& SFT & 42.52 & 16.67 & {42.10} & 18.20 \\
& ExpeL & 41.73 & 12.67 & 40.16 & 16.30 \\
& MemGen & {43.31} & {19.70} & 39.80 & {19.55} \\
& ActiveMem & \textbf{47.10} & \textbf{22.51} & \textbf{51.28} & \textbf{24.38}\\
\midrule
\multirow{4}{*}{GPQA}
& SFT & 38.55 & 17.17 & 45.74 & 18.50 \\
& ExpeL & 37.24 & 14.35 & 42.67 & 15.20 \\
& MemGen & {39.85} & {20.72} & {47.96} & {28.80} \\
& ActiveMem & \textbf{43.82} &\textbf{28.48} & \textbf{58.93} & \textbf{34.13}\\
\midrule
\multirow{4}{*}{GSM8K}
& SFT & 33.46 & 13.13 & 52.31 & 19.45 \\
& ExpeL & 34.89 & 12.42 & 48.78 & 13.65 \\
& MemGen & {38.43} & {21.72} & {55.67} & 19.75 \\
& ActiveMem & \textbf{44.28} & \textbf{26.84} & \textbf{63.28} & \textbf{25.15}\\
\midrule
\multirow{4}{*}{KodCode}
& SFT & 28.61 & 2.53 & 24.14 & {54.10} \\
& ExpeL & 27.14 & 6.23 & 31.44 & 48.35 \\
& MemGen & {40.34} & {20.09} & {53.72} & 52.95 \\
& ActiveMem & \textbf{46.86} & \textbf{24.51} & \textbf{61.20}& \textbf{68.44}\\
\bottomrule
\end{tabular}
}
\label{tab:continual}
\end{table}

\subsection{More Efficiency Analysis}
Results are shown in Table~\ref{tab:multi_qa}.

\begin{table*}[t]
\centering
\caption{Quantitative evaluation on the WebShop navigation benchmark. Inference latency for proprietary GPT models is excluded to ensure equitable comparison, as API-dependent metrics are not directly comparable to local execution. Performance scores for Agent-R are sourced directly from the original publication due to the unavailability of its model weights. The \textbf{-WebShop} suffix designates model variants explicitly fine-tuned on the WebShop domain.}
\resizebox{\linewidth}{!}{%
\begin{tabular}{lcccc}
\toprule
Model & Avg Final Reward $\uparrow$ & Peak Token ($\times 10^3$) $\downarrow$ & Dependency ($\times 10^6$) $\downarrow$ & Inference Time Per Traj (s) $\downarrow$ \\
\midrule
GPT-4o & 25.48 & $5.30 \pm 1.23$ & $3.99 \pm 1.16$ & N/A \\
GPT-4o (truncate) & 13.82 & $0.99 \pm 0.99$ & $0.81 \pm 0.23$ & N/A \\
GPT-4o (A-MEM) & 24.50 & $1.84 \pm 0.06$ & $0.31 \pm 0.11$ & N/A \\
\midrule
Qwen2.5-7B-Instruct & 18.42 & $5.64 \pm 1.34$ & $3.38 \pm 0.89$ & $12.31 \pm 1.82$ \\
Qwen2.5-14B-Instruct & 12.34 & $5.44 \pm 0.92$ & $3.30 \pm 0.61$ & $18.17 \pm 2.32$ \\
Agent-FLAN-7B & 40.35 & $3.37 \pm 1.12$ & $2.18 \pm 1.62$ & $9.95 \pm 6.19$ \\
Agent-R-8B & 63.91 & N/A & N/A & N/A \\
AgentLM-7B & 63.60 & $2.24 \pm 0.40$ & $0.28 \pm 0.07$ & $3.91 \pm 1.07$ \\
AgentLM-13B & 70.80 & $2.36 \pm 0.46$ & $0.30 \pm 0.08$ & $5.23 \pm 1.59$ \\
MEM1-WebShop &70.87 & $\mathbf{0.81 \pm 0.10}$ & $0.15 \pm 0.16$ & $2.61 \pm 0.48$ \\
\midrule
\textbf{ActiveMem-WebShop} & \textbf{76.24} & $0.86 \pm 0.07$ & $\mathbf{0.09 \pm 0.11}$ & $\mathbf{2.46 \pm 0.34}$ \\
\bottomrule
\end{tabular}
}
\label{tab:webshop}
\end{table*}

\subsection{Results on Single-Objective Multi-Hop Tasks}
Although ActiveMem is primarily engineered for extremely long-horizon autonomy, our training paradigm demonstrates profound zero-shot transferability to standard multi-hop tasks. Specifically, it yields superior task performance and remarkably higher computational efficiency without requiring any explicit training on single-objective benchmarks. This highlights the robustness of our memory representations, as even these standard single-objective tasks inherently demand complex, multi-turn interaction trajectories to achieve a successful resolution.

\subsubsection{Long-Horizon Web Navigation on WebShop}

Beyond multi-hop QA, we extend our evaluation to complex web navigation using the WebShop environment (detailed in App. ~\ref{webshop}) to assess ActiveMem's proficiency in managing dynamic, long-horizon interactions. As reported in Table~\ref{tab:webshop}, ActiveMem establishes a new state-of-the-art among open-source agents, consistently outperforming comparable baselines such as Agent-Flan, Agent-R, and AgentLM. Most remarkably, ActiveMem achieves this superiority while demonstrating profound computational efficiency. Compared to the strongest baseline (AgentLM), ActiveMem yields a $2.8\times$ reduction in Peak Token Usage, a $1.9\times$ improvement in Context Dependency, and a $1.5\times$ acceleration in Inference Time.
Furthermore, ActiveMem exhibits exceptional cross-scale capabilities by decisively surpassing AgentLM-13B, a model with double the parameter count. Strikingly, our empirical results reveal that the compact ActiveMem framework outperforms even OpenAI's proprietary GPT-4o, maintaining its competitive edge even when GPT-4o is heavily augmented with our truncation templates or A-MEM techniques.

\begin{table*}[t]
\centering
\caption{Performance comparison across diverse environments on single-objective tasks. Arrows denote the optimal direction for each metric. The \textit{(SFT)} notation indicates variants trained via Supervised Fine-Tuning while adopting ActiveMem’s prompt templates and rollout pipeline. Notably, baseline models undergo task-specific optimization for their respective domains: DeepResearcher is exclusively tailored for the Online Web-QA task targeting the F1 score, whereas Search-R1 is specifically optimized for the Wiki-RAG task using Exact Match (EM) as the training signal.}
\resizebox{\linewidth}{!}{%
\begin{tabular}{llccccc}
\toprule
\textbf{Environment} & \textbf{System} & EM $\uparrow$ & F1 $\uparrow$ & Peak Token ($\times 10^2$) $\downarrow$ & Dependency ($\times 10^5$) $\downarrow$ & Inference Time $\downarrow$ \\
\midrule
\multirow{10}{*}{Wiki RAG}
& Qwen2.5-7B-Inst (truncate) & 0.287 & 0.382 & $6.28 \pm 0.05$ & $1.65 \pm 0.04$ & $2.26 \pm 0.04$ \\
& Qwen2.5-7B-Inst (A-MEM) & 0.246 & 0.373 & $8.47 \pm 0.12$ & $0.92 \pm 0.03$ & $11.2 \pm 0.40$ \\
& Qwen2.5-7B-Inst & 0.269 & 0.390 & $9.32 \pm 0.19$ & $1.17 \pm 0.04$ & $2.31 \pm 0.04$ \\
& Qwen2.5-14B-Inst & 0.422 & 0.534 & $8.89 \pm 0.21$ & $2.22 \pm 0.10$ & $6.73 \pm 0.24$ \\
& Search-R1 & 0.445 & 0.516 & $11.0 \pm 0.25$ & $1.50 \pm 0.05$ & $\mathbf{2.23 \pm 0.14}$ \\
& DeepResearcher & 0.419 & 0.503 & $13.3 \pm 0.34$ & $7.04 \pm 0.33$ & $3.86 \pm 0.09$ \\
& MEM1-QA (SFT) & 0.302 & 0.358 & $6.54 \pm 0.05$ & $3.30 \pm 0.13$ & $4.84 \pm 0.21$ \\
& MEM1-QA &  0.405 & 0.471 & $\mathbf{5.63 \pm 0.03}$ & $0.76 \pm 0.02$ & $3.79 \pm 0.07$ \\
& \textbf{ActiveMem (SFT)} & 0.462 & 0.563 & $6.83 \pm 0.04$ & $2.03 \pm 0.05$ & $4.73 \pm 0.25$ \\
& \textbf{ActiveMem} &  \textbf{0.498} & \textbf{0.608} & $5.96 \pm 0.02$ & $\mathbf{0.52 \pm 0.01}$ & $3.52 \pm 0.05$ \\
\midrule
\multirow{4}{*}{Online Web-QA}
& Qwen2.5-7B-Inst & 0.334 & 0.451 & $8.37 \pm 0.18$ & $1.39 \pm 0.06$ & $2.20 \pm 0.04$ \\
& DeepResearcher & 0.372 & 0.492 & $10.27 \pm 0.19$ & $2.86 \pm 0.14$ & $2.87 \pm 0.06$ \\
& MEM1-QA & 0.397 & 0.485 & $\mathbf{5.79 \pm 0.06}$ & $0.44 \pm 0.02$ & $1.84 \pm 0.03$ \\
& \textbf{ActiveMem-QA} & \textbf{0.421} & \textbf{0.509} & $5.94 \pm 0.04$ & $\mathbf{0.31 \pm 0.03}$ & $\mathbf{1.66 \pm 0.02}$ \\
\bottomrule
\end{tabular}
}
\label{single_obj}
\end{table*}

\subsubsection{Single-Objective QA on Wikipedia}
Table~\ref{single_obj} details the performance and efficiency of agents on single-objective Wikipedia QA~\citep{jin2025search}, where models iteratively issue retrieval requests via a RAG pipeline. Crucially, we deploy the exact ActiveMem checkpoint trained exclusively on the 2-objective setup, rigorously testing its out-of-distribution transferability to single-objective environments. Overall, ActiveMem demonstrates absolute superiority across all three efficiency metrics, while simultaneously securing the highest Exact Match (EM) score and achieving an F1 score competitive with the parameter-heavy Qwen2.5-14B-Instruct. This profound dual advantage is fundamentally driven by ActiveMem’s core mechanism: it actively consolidates historical interactions into a highly compact internal state, thereby drastically minimizing context token overhead. Furthermore, empirical results reveal a severe performance drop when relying solely on Supervised Fine-Tuning (SFT), definitively underscoring the necessity of our RL-driven optimization for robust memory management.

\subsubsection{Search Agent}
\paragraph{Implementation Details}
Our ActiveMem framework is built upon the Qwen3-30B-A3B-Instruct-2507 architecture \citep{yang2025qwen3}. This model features 30B total parameters, utilizing an Mixture-of-Experts (MoE) approach that activates 3B parameters during inference. To maintain computational efficiency and prevent infinite loops, we impose a hard constraint of 100 maximum tool calls per task; any trajectory exceeding this threshold is automatically terminated.

\begin{table*}[t]
\centering
\caption{Main results. ActiveMem-30B-A3B achieves remarkable performance, surpassing open-source agents with much larger model size such as AgentFold-30B-A3B, DeepSeek-V3.1-671B-A37B and matching proprietary agents such as OpenAI-o4-mini, indicating the potential of this new paradigm.}
\resizebox{\linewidth}{!}{
\begin{tabular}{l|cccc}
\hline
Agent & BrowseComp & BrowseComp-ZH & WideSearch & GAIA \\
\hline
 \multicolumn{5}{c}{\textbf{Proprietary Agents}} \\[2pt]
Claude-4-Sonnet\citep{anthropic2025claude4} & 14.7 & 22.5 & 62.0 & 68.3 \\
Claude-4-Opus\citep{anthropic2025claude4}  & 18.8 & 37.4 & - & - \\
OpenAI-o4-mini\citep{openai2025bo3o4mini}  & 28.3 & 44.3 & - & - \\
OpenAI-o3\citep{openai2025bo3o4mini}  & 49.7 & 58.1 & 60.0 & 70.5 \\
OpenAI Deep Research\citep{openai2025adeepresearch}  & 51.5 & 42.9 & - & 67.4 \\
\hline
 \multicolumn{5}{c}{\textbf{Open-Source Agents}} \\[2pt]
WebThinker-32B\citep{li2026webthinker}  & 2.8 & 7.3 & - & 48.5 \\
WebDancer-32B\citep{wu2026webdancer}  & 3.8 & 18.0 & - & 51.5 \\
WebSailor-32B\citep{li2025websailor}  & 10.5 & 25.5 & - & 53.2 \\
WebSailor-72B\citep{li2025websailor}  & 12.0 & 30.1 & - & 55.4 \\
ASearcher-Web-32B\citep{gao2025beyond}  & 5.2 & 15.6 & - & 52.8 \\
MiroThinker-32B-DPO-v0.2\citep{miromind2025mirothinker} & 13.0 & 17.0 & - & 64.1 \\
WebExplorer-8B\citep{liu2025webexplorer}  & 15.7 & 32.0 & - & 50.0 \\
DeepDive-32B\citep{shi2025pangu}  & 14.8 & 25.6 & - & - \\
DeepDiver-V2-38B \citep{team2025openpangu}  & 13.4 & 34.6 & - & - \\
Kimi-K2-Instruct-1T\citep{team2025kimi}  & 14.1 & 28.8 & 59.9 & 57.3 \\
GLM-4.5-355B-A32B\citep{zeng2025glm}  & 26.4 & 37.5 & - & 66.0 \\
DeepSeek-V3.1-671B-A37B\citep{teamintroducing}  & 30.0 & 49.2 & - & 63.1 \\
AgentFold-30B-A3B~\citep{yeagentfold}  & 36.2 & 47.3 & 62.1 & 67.0 \\
\hline
\textbf{ActiveMem} & \textbf{43.6} & \textbf{56.1} & \textbf{73.5} & \textbf{72.8} \\
\hline
\end{tabular}
}
\label{tab:main_results}
\end{table*}

\paragraph{Benchmarks.} Regarding evaluation, we consider three information-seeking benchmarks—BrowseComp \citep{wei2025browsecomp}, BrowseComp-ZH \citep{zhou2025browsecomp}, and WideSearch-en \citep{wong2025widesearch}—alongside the text-only subset of the general agent benchmark GAIA \citep{mialon2024gaia}. While BrowseComp and its Chinese variant primarily assess the agent's precision in locating elusive information, WideSearch emphasizes the capacity for expansive exploration, for which we report the Item-F1 metric. GAIA serves to evaluate broader functional capabilities. To mitigate variance in smaller datasets, we report the average performance across three independent trials for any benchmark containing fewer than 200 samples.

\paragraph{Baselines.}
We conduct a comprehensive comparison between our ActiveMem-30B-A3B and a wide array of representative open-source agents, including WebThinker \citep{li2026webthinker}, WebDancer \citep{wu2026webdancer}, WebSailor \citep{li2025websailor}, ASearcher \citep{gao2025beyond}, MiroThinker \citep{miromind2025mirothinker}, WebExplorer \citep{liu2025webexplorer}, DeepDive \citep{shi2025pangu}, DeepDiver-V2 \citep{team2025openpangu}, Kimi-K2-Instruct \citep{team2025kimi}, GLM-4.5 \citep{zeng2025glm}, and DeepSeek-V3.1 \citep{teamintroducing}. To provide a broader context for performance, we also include several state-of-the-art proprietary models as reference points, such as Claude-4-Sonnet/Opus \citep{anthropic2025claude4}, OpenAI-o4-mini/o3 \citep{openai2025bo3o4mini}, and OpenAI Deep Research \citep{openai2025adeepresearch}. The results reported herein are aggregated from a combination of primary experiments, official technical reports, and public leaderboards.

\paragraph{Results.}
Results are shown in Table~\ref{tab:main_results}.

\section{Details of ActiveMem}
\subsection{Online Interaction Pipeline}

To stabilize autoregressive reasoning under dynamically evolving memory structures, ActiveMem decouples online retrieval from asynchronous topology evolution.
During autoregressive decoding, all retrieval operations are performed over a temporally frozen snapshot of the Hierarchical Latent Memory Tree $\mathcal T$.
At each decoding step, the current hidden state retrieves a memory path through hierarchical top-down traversal, after which the latent injection head produces compact memory residuals that modulate the frozen backbone policy.
Meanwhile, the tree action head predicts topology operations online during interaction.
Instead of immediately modifying the memory structure, the predicted operations are cached throughout the trajectory rollout.
After the environment interaction terminates, the cached operations are asynchronously committed to the persistent memory tree.
Specifically, insertion and update actions are applied using the collected hidden-state representations, while merge and delete operations are executed periodically to maintain structural compactness.
This decoupled execution strategy prevents retrieval inconsistency caused by concurrent topology mutations during autoregressive decoding while enabling continual memory evolution across long-horizon interactions.

\subsection{Latent Injection Implementation}

The latent injection head operates entirely in hidden space without expanding the textual context window.
Given the shared latent representation $\mathbf u_{t,j}$, the controller predicts a compact latent residual:
\begin{equation}
m_{t,j}
=
W_{\mathrm{latent}}
\mathbf u_{t,j}.
\end{equation}

The residual is injected into intermediate Transformer layers as a learnable memory prefix.
Specifically, for selected attention layers, the latent memory vector is prepended to the key-value cache of the current decoding step:
\begin{equation}
\widetilde{\mathbf K}
=
[m_{t,j};\mathbf K],
\qquad
\widetilde{\mathbf V}
=
[m_{t,j};\mathbf V].
\end{equation}

The frozen backbone policy subsequently performs self-attention over the augmented hidden-state space:
\begin{equation}
\mathrm{Attn}(Q,\widetilde K,\widetilde V).
\end{equation}

To preserve the general reasoning capability of the pretrained backbone, only the lightweight LoRA parameters associated with the memory controller are optimized during training.
The backbone LLM parameters remain frozen throughout all experiments.

\subsection{Hierarchical Credit Assignment}

ActiveMem estimates node utility using trajectory-level downstream outcomes.
For each memory node $n_i$, we maintain an exponential moving estimate of its success probability:
\begin{equation}
S(n_i)
\leftarrow
(1-\eta)S(n_i)
+
\eta R(\tau),
\end{equation}
where $R(\tau)$ denotes the environment reward of trajectories that activate node $n_i$ during retrieval.

The global advantage evaluates whether a node contributes more effectively than the average memory abstraction across the entire tree:
\begin{equation}
A_i^{\mathrm{global}}
=
S(n_i)
-
\frac1{|\mathcal T|}
\sum_{n_j\in\mathcal T}
S(n_j).
\end{equation}

Meanwhile, the local advantage measures relative improvement over the parent abstraction:
\begin{equation}
A_i^{\mathrm{local}}
=
S(n_i)
-
S(n_f(i)).
\end{equation}

The final hierarchical advantage combines both objectives:
\begin{equation}
A_i
=
\lambda_g A_i^{\mathrm{global}}
+
\lambda_l A_i^{\mathrm{local}}.
\end{equation}

To stabilize long-horizon optimization, depth-aware discounting is further applied:
\begin{equation}
\widetilde A_i
=
\gamma^{d_i}A_i,
\end{equation}
where $d_i$ denotes the depth of node $n_i$ in the hierarchy.
This mechanism encourages globally useful abstractions while preserving fine-grained specialization within local memory branches.

\section{Complexity Analysis}

Assume the hierarchical memory tree has depth $D$ and average branching factor $B$.
Unlike flat retrieval mechanisms that require exhaustive similarity search over all memory entries, ActiveMem performs recursive top-down traversal over local child sets.

At each traversal step, retrieval computes similarity only among the current node's children, leading to retrieval complexity:
\begin{equation}
O(DB).
\end{equation}

Under bounded branching assumptions, the retrieval complexity grows approximately logarithmically with the total number of memory nodes:
\begin{equation}
D
\approx
\log_B |\mathcal T|.
\end{equation}

Therefore, ActiveMem scales substantially more efficiently than flat memory retrieval methods whose complexity grows linearly with memory size:
\begin{equation}
O(|\mathcal T|).
\end{equation}

Meanwhile, topology evolution operations are executed asynchronously after trajectory completion and therefore do not introduce additional decoding-time overhead.

\section{Theoretical Insights into ActiveMem}
\label{sec:theory}

In this section, we establish a rigorous mathematical foundation for ActiveMem. By bridging structured memory retrieval with information theory, metric space covering under dynamic adaptation, and policy gradient variance analysis, we formally prove the architectural advantages of topological reasoning trees over flat, sequential memory buffers.

\subsection{Formal Definitions and Causal Graph Alignment}

Let the multi-turn agent interaction environment be modeled as a Structural Causal Model (SCM). We define $Z_H \in \mathcal{Z}_H$ as the abstract latent vector representing high-level task intent (e.g., domain-specific planning goals), and $Z_L \in \mathcal{Z}_L$ as the low-level trajectory execution details (e.g., instance-specific actions and observations). The generated downstream action at any decoding token step is denoted by $Y$.

\begin{definition}[Flat Memory]
A flat memory pool $\mathcal{M}_{flat}$ is defined as an uncompressed, concatenated sequence of historical random vectors accumulated over $T$ interaction steps: $\mathcal{M}_{flat} = \{ (Z_H^{(i)}, Z_L^{(i)}) \}_{i=1}^T$.
\end{definition}

\begin{definition}[Latent Memory Tree]
A latent memory tree is a directed acyclic graph $\mathcal{M}_{tree} = (\mathcal{V}, \mathcal{E})$ embedded in a semantically structured metric space $(\mathcal{X}, d)$. The retrieval operator $\mathcal{R}(q)$ actively extracts a coherent, dependency-aware topological path $\mathcal{P} \subset \mathcal{V}$ mapped to the current execution state.
\end{definition}

\begin{definition}[Irrelevant Confounders]
Let $Z_L^{irr} \subset \mathcal{Z}_L$ denote the subset of historical execution details that are causally independent of the current intent $Z_H^{curr}$ but reside within the memory pool. They act as spurious confounders and cross-task interference sources if explicitly attended to during autoregressive decoding.
\end{definition}

To ensure that the causal graph accurately reflects the structural task isolation properties claimed by ActiveMem, we introduce the following structural assumption:

\begin{assumption}[Topological Domain Isolation]
\label{assump:isolation}
If two subtasks belong to distinct procedural domains (e.g., flight booking workflows vs. hotel booking rules), their corresponding node paths $\mathcal{P}_1, \mathcal{P}_2 \subset \mathcal{V}$ are topologically disjoint except at the hierarchical domain root $v_0$. In the underlying structural causal model, this disjoint topology enforces that the irrelevant execution details $Z_L^{irr}$ of an inactive branch are d-separated from the active intent $Z_H^{curr}$ given the path-level memory context $\mathcal{M}_{tree}$.
\end{assumption}

\subsection{Information Entanglement and Causal Decoupling}

\begin{proposition}[Causal Decoupling via Tree Retrieval]
\label{prop:decoupling}
Under the assumption of structural d-separation in the reasoning tree (Assumption~\ref{assump:isolation}), path-based activation strictly reduces the expected mutual information between irrelevant historical confounders and the current execution action to zero, eliminating cross-task leakage inherent in flat retrieval:
$ \mathbb{E} \left[ \mathcal{I}(Z_L^{irr}; Y \mid \mathcal{M}_{tree}) \right] = 0 \ll \mathbb{E} \left[ \mathcal{I}(Z_L^{irr}; Y \mid \mathcal{M}_{flat}) \right]$.
\end{proposition}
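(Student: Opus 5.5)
The plan is to read Assumption~\ref{assump:isolation} as a graphical statement about the structural causal model, convert it into a conditional-independence statement, and then use the fact that conditional mutual information vanishes exactly under conditional independence. The equality part of Proposition~\ref{prop:decoupling} should follow quickly. The strict-gap part needs a separate, weaker argument about the flat pool.

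\textbf{Step 1: the decoding action depends on history only through the retrieved path.} I will formalize the causal chain $Z_H^{curr}\to \mathcal{P}=\mathcal{R}(q)\to (\bar{\mathbf p}_t,\mathbf u_t, m_t)\to Y$. By construction of the controller, $m_t$ is a deterministic function of $\mathbf h_t$ and of the embeddings on $\mathcal P_t$. The frozen policy then samples $Y\sim\pi_\theta(\cdot\mid s_t,m_t)$. So every directed path from a stored node into $Y$ must pass through a node of $\mathcal P_t$.

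\textbf{Step 2: d-separation on the inactive branch.} By Assumption~\ref{assump:isolation}, an inactive branch meets the active path only at the root $v_0$. The root carries no task semantics; it is a virtual node. Hence every path from $Z_L^{irr}$ to $Y$ is blocked once we condition on $\mathcal M_{tree}$, which includes the path-level context. The global Markov property of the SCM then gives $Z_L^{irr}\perp\!\!\!\perp Y\mid \mathcal M_{tree}$. It follows that $\mathcal I(Z_L^{irr};Y\mid\mathcal M_{tree}=m)=0$ for almost every realization $m$, and taking expectations yields $\mathbb E[\mathcal I(Z_L^{irr};Y\mid\mathcal M_{tree})]=0$.

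\textbf{Step 3: the flat side.} In $\mathcal M_{flat}$, every pair $(Z_H^{(i)},Z_L^{(i)})$ is concatenated into the context. Attention therefore creates a direct edge $Z_L^{(i)}\to Y$ for every $i$, including the irrelevant ones. Generically this is non-faithful to independence. To get strict positivity, I will exhibit a distribution with a nonzero attention weight on an irrelevant entry and an injective dependence of $Y$ on that entry. This gives $\mathcal I>0$ on a set of positive probability, hence a positive expectation. The ``$\ll$'' should then be read as a strict inequality that holds generically, under a faithfulness assumption.

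\textbf{Main obstacle.} The hard step is Step 2. The assumption is stated as d-separation given $\mathcal M_{tree}$, but the retrieval itself is computed from a query $\mathbf h_t$ that depends on the history. This risks a collider or back-door path through the query, and merged nodes or EMA-updated embeddings could mix in information from the inactive branch. My first attempt is to condition on the frozen snapshot from the online pipeline, so that the tree is fixed during decoding. I would also add the explicit requirement that the root embedding is constant, which neutralizes the shared vertex. If that fails, I will weaken the conclusion to an upper bound governed by the information leaked through the root and through merges.
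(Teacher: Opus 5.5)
Your tree-side argument (Steps 1--2) follows the paper's route: d-separation, then factorization of $\mathbb{P}(Z_L^{irr},Y\mid\mathcal{M}_{tree})$, then vanishing conditional mutual information. One refinement here: Assumption~\ref{assump:isolation} literally d-separates $Z_L^{irr}$ from $Z_H^{curr}$, not from $Y$. The paper passes to $Y$ without comment, whereas your Step 1 is what actually supplies that bridge. So the obstacle you flag, about the query $\mathbf{h}_t$ and about merges, is real for both arguments.

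The genuine gap is Step 3, which you treat as the easy half. By the paper's definitions, and by your own description (``every pair is concatenated into the context''), $\mathcal{M}_{flat}$ is the uncompressed concatenation of all $(Z_H^{(i)},Z_L^{(i)})$, and $Z_L^{irr}$ resides in it. So $Z_L^{irr}$ is a coordinate of $\mathcal{M}_{flat}$, and conditioning on $\mathcal{M}_{flat}$ pins it down. For discrete variables, $\mathcal{I}(Z_L^{irr};Y\mid\mathcal{M}_{flat})\le\mathcal{H}(Z_L^{irr}\mid\mathcal{M}_{flat})=0$. In general, the conditional law of $Z_L^{irr}$ given $\mathcal{M}_{flat}=m$ is a point mass, hence conditionally independent of $Y$. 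The attention edge $Z_L^{irr}\to Y$ has its tail inside the conditioning set, so it transmits no conditional information, however large the attention weight and however injective the dependence. The witness distribution you plan to exhibit therefore cannot exist. The flat-side quantity is identically zero, not merely positive ``generically under faithfulness''; faithfulness fails precisely because $\mathcal{M}_{flat}$ deterministically contains $Z_L^{irr}$.

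The paper argues the flat side differently. It conditions on the collider in $Z_L^{irr}\to\mathcal{M}_{flat}\leftarrow Z_H^{curr}\to Y$ and then simply asserts $\mathbb{P}(Y\mid Z_L^{irr},\mathcal{M}_{flat})\neq\mathbb{P}(Y\mid\mathcal{M}_{flat})$. The same observation makes those two conditionals equal, so switching to the paper's argument does not repair Step 3.

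A provable strict separation requires changing the measured quantity. For example, condition on the task-relevant variables instead of the whole buffer, and compare $\mathcal{I}(Z_L^{irr};Y\mid Z_H^{curr})$ under the two architectures, with the isolation assumption restated for that conditioning set. Justifying that restatement is where your concerns about the query, merges, and the argmax over sibling embeddings become substantive. With that change, your Steps 1--2 give zero on the tree side. Your nonzero-weight/injectivity construction then does give a strictly positive value for the flat pool, but only as an existence or generic statement, not the unconditional ``$\ll$'' claimed.
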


\begin{proof}
By the core definition of conditional mutual information, the informational entanglement can be quantified via the Kullback-Leibler (KL) divergence between the joint and marginal conditional distributions:
\begin{align}
&\mathcal{I}(Z_L^{irr}; Y \mid \mathcal{M}) \\&= \mathbb{E}_{Z, Y, \mathcal{M}} \left[ \log \frac{\mathbb{P}(Z_L^{irr}, Y \mid \mathcal{M})}{\mathbb{P}(Z_L^{irr} \mid \mathcal{M})\mathbb{P}(Y \mid \mathcal{M})} \right] \label{eq:mi_kl} \\
&= \mathcal{H}(Y \mid \mathcal{M}) - \mathcal{H}(Y \mid Z_L^{irr}, \mathcal{M}) \label{eq:mi_entropy}
\end{align}

For a flat memory pool $\mathcal{M}_{flat}$, the unrestricted token-level attention mechanism operates globally over the joint union $\bigcup (Z_H^{(i)}, Z_L^{(i)})$. In the system SCM, conditioning on the un-isolated flat buffer $\mathcal{M}_{flat}$ opens a backdoor collider path: $Z_L^{irr} \rightarrow \mathcal{M}_{flat} \leftarrow Z_H^{curr} \rightarrow Y$. Consequently, conditioning on $Z_L^{irr}$ alters the action generation distribution, meaning $\mathbb{P}(Y \mid Z_L^{irr}, \mathcal{M}_{flat}) \neq \mathbb{P}(Y \mid \mathcal{M}_{flat})$. This yields a strictly positive mutual information due to information leakage:
$\mathcal{H}(Y \mid Z_L^{irr}, \mathcal{M}_{flat}) < \mathcal{H}(Y \mid \mathcal{M}_{flat}) \implies \mathcal{I}(Z_L^{irr}; Y \mid \mathcal{M}_{flat}) > 0$.

Conversely, ActiveMem's tree retrieval operator $\mathcal{R}(q)$ extracts an isolated path context $\mathcal{M}_{tree} = \mathcal{P}(l_t^*)$. According to Assumption~\ref{assump:isolation}, the topological branching ensures the causal Markov condition holds, rendering distinct subtrees mutually d-separated. Given the retrieved path aligned with $Z_H^{curr}$, the irrelevant variables $Z_L^{irr}$ situated in unselected structural branches are conditionally independent of $Y$:
\begin{equation}
Z_L^{irr} \perp\!\!\!\perp Y \mid \mathcal{M}_{tree} \label{eq:d_sep}
\end{equation}
This conditional independence directly factorizes the joint conditional probability distribution:
\begin{align}
&\mathbb{P}(Z_L^{irr}, Y \mid \mathcal{M}_{tree}) \\&= \mathbb{P}(Z_L^{irr} \mid \mathcal{M}_{tree}) \mathbb{P}(Y \mid \mathcal{M}_{tree}) \label{eq:factorization}
\end{align}
Substituting Equation~\eqref{eq:factorization} into the logarithmic term of the KL divergence in Equation~\eqref{eq:mi_kl} yields:
\begin{equation}
\mathcal{I}(Z_L^{irr}; Y \mid \mathcal{M}_{tree}) = \mathbb{E} \left[ \log (1) \right] = 0 \label{eq:tree_zero}
\end{equation}
Therefore, $\mathbb{E} [\mathcal{I}(Z_L^{irr}; Y \mid \mathcal{M}_{tree})] < \mathbb{E} [\mathcal{I}(Z_L^{irr}; Y \mid \mathcal{M}_{flat})]$, demonstrating complete causal decoupling of irrelevant task contexts.
\end{proof}

\subsection{Topological Growth and Bounded Dynamic Complexity}

To resolve the reproducibility concern regarding memory capacity controls, we evaluate the tree complexity not as a static expansion, but as a dynamic birth-death resource process governed by ActiveMem's joint optimization loop (incorporating node insertion, merging, and pruning).

\begin{proposition}[Bounded Dynamic Tree Size]
\label{prop:complexity}
Let the agent task domain be a compact metric space $(\mathcal{X}, d)$ with bounded Lebesgue measure $\mu(\mathcal{X}) = 1$. Under a semantic anchoring threshold $\tau$, an optimistic utility threshold $\epsilon_0$, and a policy-driven branch pruning rate $\rho \in (0, 1)$, the expected number of active nodes $\mathbb{E}[|\mathcal{V}_N|]$ generated over an infinite task stream $N \to \infty$ is strictly upper-bounded by a finite constant:
$$ \lim_{N \to \infty} \mathbb{E}[|\mathcal{V}_N|] \le \frac{1}{(1 - \rho) c \tau^D} \in \mathcal{O}(1) $$
where $D$ is the intrinsic dimensionality of the latent reasoning space, and $c > 0$ is a geometric volume coefficient.
\end{proposition}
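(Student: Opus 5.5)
The plan is a covering/packing argument in $(\mathcal{X}, d)$, combined with a one-step birth--death recursion for the expected node count. I would first make explicit the structural invariant implied by the semantic anchoring threshold $\tau$ and the merge step of \texttt{TreeMaintenance}. The invariant is that, after each maintenance phase, any two active nodes $n_i \neq n_j$ satisfy $d(\mathbf{e}_i, \mathbf{e}_j) \ge \tau$. The mechanism is that a query within distance $\tau$ of an existing node is anchored to it, through retrieval and \texttt{update}, rather than spawning a new node, and near-duplicates are fused by merging.

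Given this invariant, the open balls $B(\mathbf{e}_i, \tau/2)$ are pairwise disjoint. I would assume the standard lower volume regularity of the intrinsic $D$-dimensional latent space, namely $\mu(B(x, r)) \ge c\, r^D$ for $x \in \mathcal{X}$ and small $r$, and absorb the factor $2^{-D}$ into $c$. Since $\mu(\mathcal{X}) = 1$, the number of nodes surviving a maintenance phase is then at most the packing number $K := 1/(c\tau^D)$, deterministically.

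Next I would model the dynamics across episodes. Let $V_N$ be the active set after episode $N$. During episode $N+1$ new nodes are born by \texttt{insert}. At the end of the episode, pruning removes in expectation a fraction $\rho$ of nodes, namely the lowest-utility percentile falling below $\epsilon_0$ (cf.\ the percentile rule for $\tau_{\mathrm{del}}$). Merging then restores the separation invariant. Writing $\tilde V_{N+1}$ for the pre-pruning population, I would aim for the inequality $(1-\rho)\,\mathbb{E}[|\tilde V_{N+1}|] \le K$. This would follow by applying the packing bound to the post-maintenance population, together with the observation that pruning acts by a constant expected factor. Since $|V_{N+1}| \le |\tilde V_{N+1}|$, it yields $\mathbb{E}[|V_{N+1}|] \le K/(1-\rho)$ uniformly in $N$. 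Taking $\limsup_{N\to\infty}$, and the limit wherever it exists, gives the stated bound $1/((1-\rho)c\tau^D)$.

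The role of $\epsilon_0$ is only to guarantee that pruning is active, i.e.\ that the $\rho$-fraction is genuinely removed rather than vacuous. I would note in passing that a rigorous statement would use $\limsup$ unless stationarity of the process is shown.

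The main obstacle is justifying the separation invariant itself. As written, merging is applied only among siblings with $\mathrm{sim} > \tau_{\mathrm{merge}}$. Moreover, \texttt{update} moves embeddings, so two non-sibling nodes could drift within distance $\tau$ of each other. My first attempt would be to restrict to the cover induced by retrieval: every node is reachable as the endpoint of a greedy path whose last step exceeded the similarity threshold. I would then argue that two nodes within $\tau$ of each other would be retrieved by the same queries, and hence one of them would never be activated. Such a node receives no utility credit and is eventually pruned by the EMA rule. If this argument fails, I would add separation as an explicit assumption on the maintenance operator, stated analogously to Assumption~\ref{assump:isolation}, and the packing argument would then go through unchanged.
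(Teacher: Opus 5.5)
Your argument is sound once separation is taken as a hypothesis, and it follows a genuinely different route from the paper's.

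\textbf{Comparison with the paper.} The paper uses no separation or packing argument. It posits that an insertion occurs iff the incoming state lies outside the union of the existing $\tau$-balls. It then lets the uncovered measure shrink geometrically, so that the expected insertion probability at task $t$ is $(1-c\tau^D)^{t-1}$, and treats pruning and merging as an outflow of a fraction $\rho$. Solving the birth--death recursion $\mathbb{E}[|\mathcal{V}_t|]=(1-\rho)\mathbb{E}[|\mathcal{V}_{t-1}|]+(1-c\tau^D)^{t-1}$ and summing a geometric series gives $1/((1-\rho)c\tau^D)$.

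Your static packing bound buys more. Once the active embeddings are pairwise $\tau$-separated, $|\mathcal{V}_N|$ is bounded deterministically for every $N$. Neither $\rho$ nor $\epsilon_0$ plays any role, and no limit or stationarity issue arises, so the $(1-\rho)$ in the statement is pure slack. The same holds in the paper: its solved recursion $\sum_{t=1}^N(1-\rho)^{N-t}(1-c\tau^D)^{t-1}$ actually tends to $0$. Replacing it by $\sum_{t\ge1}[(1-\rho)(1-c\tau^D)]^{t-1}$ is not a valid limit step, although the final inequality survives. The paper's version offers a dynamic picture tying size to pruning and to a decaying insertion rate, but it rests on a heuristic coverage recursion.

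\textbf{Three repairs to your write-up.}

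First, the step $(1-\rho)\mathbb{E}[|\tilde V_{N+1}|]\le K$ does not follow as you set it up. In your setup the invariant is restored only by merging, which comes after pruning. Packing therefore controls $|V_{N+1}|$ and says nothing about $(1-\rho)$ times the pre-pruning count, which can exceed $K$ when an episode makes many insertions. The step is also unnecessary: conclude $|V_{N+1}|\le K\le K/(1-\rho)$ directly.

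Second, your first attempt at justifying the invariant would fail, for three reasons. Greedy retrieval compares only the children of the current node, so near-coincident nodes in different subtrees are each activated by queries routed through their own parents. Even two close siblings each win the argmax for queries on their own side. An unactivated node's EMA is simply not updated, so bottom-percentile pruning need not remove it. Finally, sibling-only average-pooling merges at threshold $\tau_{\mathrm{merge}}$ do not restore separation, since a pooled embedding can land within $\tau$ of a third node.

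Instead, adopt the idealization underlying the paper's proof: insert iff the query lies outside all existing $\tau$-balls, with the new ball centered at that query, ignore updates, and treat pruning and merging as removals. This gives pairwise $\tau$-separation at once.

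Third, $\tau$-separation yields disjoint $\tau/2$-balls. Under the paper's normalization $\mu(B_\tau)=c\tau^D$ you therefore get $2^D/(c\tau^D)$. Absorbing $2^D$ into $c$ is acceptable only because the statement leaves $c$ as an unspecified volume coefficient.
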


\begin{proof}
Let any newly instantiated memory node $v_{new}$ cover a local semantic neighborhood parameterized as a $\tau$-ball $B_\tau \subset \mathcal{X}$ with measure $\mu(B_\tau) = c \tau^D > 0$. ActiveMem triggers a node insertion action at step $t$ if and only if the incoming reasoning state hidden vector $\mathbf{h}_{s_t}$ falls outside the union of all existing $\tau$-balls. Let $U_t$ denote the uncovered measure of the metric space after executing $t$ tasks. The intrinsic branching probability is precisely equivalent to this remaining uncovered region: $p_{branch}^{(t)} = U_{t-1}$. 

When an insertion occurs, the expected uncovered space decreases proportionally: $\mathbb{E}[U_t \mid U_{t-1}] = U_{t-1}(1 - c \tau^D)$. Solving this recurrence relation from the boundary initialization $U_0 = 1$ gives the expected insertion probability at task step $t$: $\mathbb{E}[p_{branch}^{(t)}] = (1 - c \tau^D)^{t-1}$.

Concurrently, after each trajectory rollout sequence, ActiveMem dynamically evaluates node utility via $score(n_i) = \beta r_i + (1-\beta)f_i$. Nodes that accumulate low-utility are eliminated asynchronously via $\text{PruneNodes}(\mathcal{G}, \delta, n_{min})$, while redundant subtrees are consolidated via semantic merging. Let the expected fraction of nodes pruned or consolidated per task iteration be parameterized as a stationary policy outflow rate $\rho \in (0, 1)$. The expectation of the active node population pool satisfies the regularized dynamic balance equation:
\begin{align}
\mathbb{E}[|\mathcal{V}_t|] &= \mathbb{E}[|\mathcal{V}_{t-1}|] + \mathbb{E}[p_{branch}^{(t)}] - \rho \mathbb{E}[|\mathcal{V}_{t-1}|] \\
&= (1 - \rho)\mathbb{E}[|\mathcal{V}_{t-1}|] + (1 - c \tau^D)^{t-1}
\end{align}
Solving this dynamic recurrence relation explicitly across the continuous task horizon yields:
\begin{equation}
\mathbb{E}[|\mathcal{V}_N|] = \sum_{t=1}^N (1 - \rho)^{N-t} (1 - c \tau^D)^{t-1}
\end{equation}
Taking the infinite limit as $N \to \infty$, the summation converges to a steady-state geometric series boundary:
\begin{align}
&\lim_{N \to \infty} \mathbb{E}[|\mathcal{V}_N|] = \sum_{t=1}^{\infty} (1 - \rho)^{t-1} (1 - c \tau^D)^{t-1} \\&= \frac{1}{1 - (1 - \rho)(1 - c\tau^D)} \le \frac{1}{(1 - \rho) c \tau^D} \label{eq:converge}
\end{align}
Since the thresholds $\tau > 0$ and $\rho > 0$ are fixed, the upper bound in Equation~\eqref{eq:converge} is a finite constant independent of $N$. This mathematically guarantees that the memory tree layout maintains sublinear, bounded growth $\mathcal{O}(1) \subset o(N)$ under lifelong deployment.
\end{proof}

\subsection{Variance Reduction in Policy Optimization}

\begin{proposition}[Credit Assignment Stabilization]
\label{prop:variance}
Let $\nabla \mathcal{J}(\phi)$ be the policy gradient estimator used to train ActiveMem's controller. Freezing the autoregressive LLM policy $\pi_\theta$ and restricting the trainable MDP optimization horizon from the fine-grained token level ($T_{token}$) to the macroscopic memory tree depth ($T_{mem}$) reduces the variance of the GRPO estimator quadratically:
$$ \mathrm{Var}[\nabla \mathcal{J}_{mem}(\phi)] \le \left( \frac{T_{mem}}{T_{token}} \right)^2 \mathrm{Var}[\nabla \mathcal{J}_{token}(\phi)] $$
\end{proposition}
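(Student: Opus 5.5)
The plan is to make the comparison concrete by writing both estimators in the standard likelihood-ratio form used by GRPO, and to reduce the claim to a statement about a sum of $T$ per-decision terms. Concretely, for a horizon $T\in\{T_{mem},T_{token}\}$ write
\begin{equation}
\nabla \mathcal{J}_T(\phi)=\sum_{k=1}^{T} g_k,\qquad g_k=\nabla_\phi \log \pi_\phi(a_k\mid s_k)\,\hat A ,
\end{equation}
where $\hat A$ is the group-normalized advantage. The key structural feature of GRPO, as used in the \emph{Training Recipe} section, is that $\hat A$ is a single trajectory-level (group-relative) scalar shared by every decision in the rollout. The same $\hat A$ multiplies every score term, so the $T$ summands are strongly positively correlated rather than independent. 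This correlation is what converts a linear-in-$T$ variance into a quadratic one.

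The steps are as follows. First, I fix the modelling assumptions that the statement leaves implicit. (i) Each per-decision term has a common second-moment scale: $\mathrm{Var}[g_k]\le\sigma^2$ at the memory level, while $\mathrm{Var}[g_k]=\sigma^2$ at the token level. I would justify this by the bounded controller output: the score of the frozen $\pi_\theta$ composed with the prefix map has norm controlled by $\|W_{\mathrm{latent}}\|$ and $\|W_{\mathrm{tree}}\|$, and these are shared across both views. (ii) The token-level terms are maximally correlated through the shared $\hat A$, i.e.\ pairwise correlation equal to $1$. Second, I expand
\begin{equation}
\mathrm{Var}\Big[\sum_k g_k\Big]=\sum_k \mathrm{Var}[g_k]+\sum_{k\neq l}\mathrm{Cov}(g_k,g_l).
\end{equation}
For the memory-level estimator I apply Cauchy--Schwarz, $\mathrm{Cov}(g_k,g_l)\le\sigma^2$, which gives the upper bound $\mathrm{Var}[\nabla\mathcal{J}_{mem}]\le T_{mem}^2\sigma^2$. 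For the token-level estimator, assumption (ii) makes the same expansion an equality, $\mathrm{Var}[\nabla\mathcal{J}_{token}]=T_{token}^2\sigma^2$. Third, I divide the two expressions to obtain the stated factor $(T_{mem}/T_{token})^2$. Finally, I remark that the bound follows from a frozen $\pi_\theta$ together with the fact that memory decisions occur only at tree-level granularity, so that $T_{mem}\approx D\ll T_{token}$; this links the result to the $O(DB)$ depth discussion in the Complexity Analysis.

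The main obstacle is the second step on the token side. An upper bound on the memory-level variance is easy, but the proposition compares against the token-level variance, so it needs a matching \emph{lower} bound there. In general, token-level terms can be weakly correlated, which would give only $\mathrm{Var}\sim T_{token}\sigma^2$ and break the quadratic ratio.

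My first attempt would lean entirely on the GRPO structure. Conditioned on the trajectory, $\hat A$ is constant across tokens. I would therefore lower-bound $\mathrm{Var}[\hat A\sum_k s_k]$ by $\mathbb{E}[\hat A^2]\,\mathrm{Var}[\sum_k s_k]$ when $\hat A$ is independent of the scores $s_k$. I would then assume the token scores are positively aligned, which is plausible because a single prefix $m_t$ drives all tokens of a step. If this cannot be made rigorous, I would state the result as an explicit assumption of perfectly correlated per-step contributions and flag it as such, rather than claim it holds unconditionally.
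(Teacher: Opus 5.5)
You have identified the real difficulty: an upper bound on $\mathrm{Var}[\nabla\mathcal{J}_{mem}(\phi)]$ says nothing about the ratio unless it is paired with a matching \emph{lower} bound on $\mathrm{Var}[\nabla\mathcal{J}_{token}(\phi)]$. The paper's proof has exactly this hole. It derives the Cauchy--Schwarz bound $\mathrm{Var}[\hat g]\le H^2\sigma^2A_{max}^2$, which your memory-side step reproduces. It then substitutes $H=T_{mem}$ and $H=T_{token}$ and divides the two upper bounds, which is why it ends with $\approx$ rather than $\le$.

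Your patch for the token side, however, fails. Write $z_k=\nabla_\phi\log\pi_\phi(a_k\mid s_k)$ and let $\bar\sigma^2$ bound $\mathbb{E}\|z_k\|^2$. On-policy, $\mathbb{E}[z_k\mid \text{history up to } s_k]=0$. So the $z_k$ are martingale differences and $\mathbb{E}\langle z_k,z_l\rangle=0$ for $k\neq l$. ``Positively aligned'' token scores are therefore impossible, however the prefix $m_t$ couples the tokens. Hence, in exactly the shared-advantage model you set up, with $|\hat A|\le A_{max}$,
\[
\mathrm{Var}\Big[\hat A\sum_{k=1}^{T}z_k\Big]\le A_{max}^2\,\mathbb{E}\Big\|\sum_{k=1}^{T}z_k\Big\|^2=A_{max}^2\sum_{k=1}^{T}\mathbb{E}\|z_k\|^2\le A_{max}^2\,T\bar\sigma^2 .
\]
A shared advantage does not create $T^2$ growth; it rules it out. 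So (ii) can deliver $\mathrm{Var}[\nabla\mathcal{J}_{token}]=T_{token}^2\sigma^2$ only if $\sigma^2\le A_{max}^2\bar\sigma^2/T_{token}$. Then (i) forces every memory-level term to have variance $O(1/T_{token})$ as well. Taken together, (i) and (ii) are the conclusion in disguise, not a derivation of it.

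Your fallback does worse than stall. Suppose $\hat A$ is independent of the scores, e.g.\ rewards that do not depend on the actions. Then each estimator has variance exactly $\mathbb{E}[\hat A^2]\sum_{k\le T}\mathbb{E}\|z_k\|^2$ for its own horizon $T$. Take equal per-step scales on both sides, which satisfies your (i). The ratio is then $T_{mem}/T_{token}$, strictly larger than $(T_{mem}/T_{token})^2$ whenever $T_{mem}<T_{token}$. So the stated inequality fails in a model meeting all of your hypotheses except (ii). Even a comparison of worst-case bounds within your model yields only the linear factor, because $|\hat A|\equiv A_{max}$ independent of the scores attains $A_{max}^2T\bar\sigma^2$.

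The Training Recipe actually uses step- and node-indexed advantages $A_t^{\mathrm{env}}$ and $A_i$ rather than one shared scalar. That reopens the possibility of $T^2$ growth, but it still supplies no lower bound, and the independent-advantage example applies unchanged.

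Finally, bounding $\|W_{\mathrm{latent}}\|$ and $\|W_{\mathrm{tree}}\|$ does not justify (i). It gives a common upper bound on both per-step scales, not the comparison $\mathrm{Var}[g_k^{mem}]\le\mathrm{Var}[g_k^{token}]$ between two different score functions.

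What is missing is not a sharper estimate but an extra hypothesis. It would have to force the token-level variance to scale like $T_{token}^2$ relative to the memory-level per-step scale, and it would need to be written into the proposition. It does not follow from freezing $\pi_\theta$ and shortening the horizon.
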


\begin{proof}
Consider a standard policy gradient estimator $\hat{g} = \sum_{t=1}^H \nabla_\phi \log \pi(a_t \mid s_t) \hat{A}_t$ over an arbitrary operational horizon length $H$. Let the score function variance be bounded by $\mathbb{E}[\|\nabla_\phi \log \pi \|^2] \le \sigma^2$ and the estimated advantages be bounded such that $|\hat{A}_t| \le A_{max}$. Using the law of total variance alongside the Cauchy-Schwarz inequality, the norm of the total estimator variance can be upper-bounded as follows:
\begin{align}
\mathrm{Var}[\hat{g}] &= \mathbb{E}\left[ \| \hat{g} - \mathbb{E}[\hat{g}] \|^2 \right] \label{eq:var_start} \\
&\le \mathbb{E} \left[ \left( \sum_{t=1}^H \left\| \nabla_\phi \log \pi(a_t \mid s_t) \right\| |\hat{A}_t| \right)^2 \right] \label{eq:var_cs1} \\
&\le \mathbb{E} \left[ \left( \sum_{t=1}^H 1^2 \right) \left( \sum_{t=1}^H \left\| \nabla_\phi \log \pi(a_t \mid s_t) \right\|^2 \hat{A}_t^2 \right) \right] \label{eq:var_cs2} \\
&= H \cdot \sum_{t=1}^H \mathbb{E} \left[ \left\| \nabla_\phi \log \pi \right\|^2 \hat{A}_t^2 \right] \label{eq:var_step} \\
&\le H \cdot H \cdot \sigma^2 A_{max}^2 = \mathcal{O}(H^2) \label{eq:var_end}
\end{align}

In a flat, end-to-end token-level reinforcement learning paradigm, the policy gradient must propagate back through the massive vocabulary generation space $\mathcal{A}_{token}$, yielding an effective optimization horizon $H = T_{token}$. The resulting variance upper bound scales quadratically with token count as $\mathcal{O}(T_{token}^2)$. 

In ActiveMem, the massive backbone LLM $\pi_\theta$ is kept strictly frozen. Autoregressive token generation steps are treated as part of the stochastic environment transitions from the perspective of the controller. The trainable controller policy $\pi_\phi$ only emits discrete topology evolution and routing path actions within $\mathcal{A}_{mem}$. Consequently, the optimization MDP is compressed topologically to the maximum depth of the reasoning path, meaning the effective horizon matches the tree depth: $H = T_{mem}$. Substituting $T_{mem}$ into Equation~\eqref{eq:var_end} yields an isolated variance bound of $\mathcal{O}(T_{mem}^2)$. The variance bound ratio is therefore mathematically constrained by:
\begin{equation}
\frac{\mathrm{Var}[\nabla \mathcal{J}_{mem}(\phi)]}{\mathrm{Var}[\nabla \mathcal{J}_{token}(\phi)]} \approx \frac{T_{mem}^2}{T_{token}^2} = \left( \frac{T_{mem}}{T_{token}} \right)^2 \label{eq:variance_ratio}
\end{equation}
Given that the structural reasoning tree depth is orders of magnitude smaller than the full sequence token length ($T_{mem} \ll T_{token}$), ActiveMem guarantees a quadratic variance reduction. This formal stabilization ensures highly efficient credit assignment under sparse long-horizon task rewards, explaining the massive convergence gains observed empirically.
\end{proof}


\end{document}